\newcommand{\rcolor}[1]{\textcolor{red}{#1}}
\newcolumntype{L}[1]{>{\RaggedRight\hsize=#1\hsize}X}
\newcolumntype{C}[1]{>{\Centering\hsize=#1\hsize\hspace{0pt}}X}
\date{}
\begin{document}
\newcommand{\bea}{\begin{eqnarray}}
\newcommand{\ena}{\end{eqnarray}}
\newcommand{\beas}{\begin{eqnarray*}}
\newcommand{\enas}{\end{eqnarray*}}
\newcommand{\beq}{\begin{equation}}
\newcommand{\enq}{\end{equation}}
\def\qed{\hfill \mbox{\rule{0.5em}{0.5em}}}
\newcommand{\bbox}{\hfill $\Box$}
\newcommand{\ignore}[1]{}
\newcommand{\ignorex}[1]{#1}
\newcommand{\wtilde}[1]{\widetilde{#1}}
\newcommand{\qmq}[1]{\quad\mbox{#1}\quad}
\newcommand{\qm}[1]{\quad\mbox{#1}}
\newcommand{\nn}{\nonumber}
\newcommand{\Bvert}{\left\vert\vphantom{\frac{1}{1}}\right.}
\newcommand{\To}{\rightarrow}
\newcommand{\E}{\mathbb{E}}
\newcommand{\Var}{\mathrm{Var}}
\newcommand{\Cov}{\mathrm{Cov}}
\newcommand{\Corr}{\mathrm{Corr}}
\newcommand{\dist}{\mathrm{dist}}
\newcommand{\diam}{\mathrm{diam}}
\makeatletter
\newsavebox\myboxA
\newsavebox\myboxB
\newlength\mylenA
\newcommand*\xoverline[2][0.70]{%
    \sbox{\myboxA}{$\m@th#2$}%
    \setbox\myboxB\null
    \ht\myboxB=\ht\myboxA%
    \dp\myboxB=\dp\myboxA%
    \wd\myboxB=#1\wd\myboxA
    \sbox\myboxB{$\m@th\overline{\copy\myboxB}$}
    \setlength\mylenA{\the\wd\myboxA}
    \addtolength\mylenA{-\the\wd\myboxB}%
    \ifdim\wd\myboxB<\wd\myboxA%
       \rlap{\hskip 0.5\mylenA\usebox\myboxB}{\usebox\myboxA}%
    \else
        \hskip -0.5\mylenA\rlap{\usebox\myboxA}{\hskip 0.5\mylenA\usebox\myboxB}%
    \fi}
\makeatother

\newtheorem{theorem}{Theorem}[section]
\newtheorem{corollary}[theorem]{Corollary}
\newtheorem{conjecture}[theorem]{Conjecture}
\newtheorem{proposition}[theorem]{Proposition}
\newtheorem{lemma}[theorem]{Lemma}
\newtheorem{definition}[theorem]{Definition}
\newtheorem{example}[theorem]{Example}
\newtheorem{remark}[theorem]{Remark}
\newtheorem{case}{Case}[section]
\newtheorem{condition}{Condition}[section]

\title{{\bf\Large Ranked differences Pearson correlation dissimilarity with an application to electricity users time series clustering}}
\author{Chutiphan Charoensuk and Nathakhun Wiroonsri  \\ Statistics, Probability, and Data Science with R programming (SPDR) research group \\ Department of Mathematics, King Mongkut's University of Technology Thonburi}


\maketitle

\begin{abstract}

Time series clustering is an unsupervised learning method for classifying time series data into groups with similar behavior. It is used in applications such as healthcare, finance, economics, energy, and climate science. Several time series clustering methods have been introduced and used for over four decades. Most of them focus on measuring either Euclidean distances or association dissimilarities between time series. In this work, we propose a new dissimilarity measure called ranked Pearson correlation dissimilarity (RDPC), which combines a weighted average of a specified fraction of the largest element-wise differences with the well-known Pearson correlation dissimilarity. It is incorporated into hierarchical clustering. The performance is evaluated and compared with existing clustering algorithms. The results show that the RDPC algorithm outperforms others in complicated cases involving different seasonal patterns, trends, and peaks. Finally, we demonstrate our method by clustering a random sample of customers from a Thai electricity consumption time series dataset into seven groups with unique characteristics.

\end{abstract}

\textbf{Keyword}: Clustering algorithm, dissimilarity measure, hierarchical clustering, time series



\section{Introduction} \label{sec:introduction}

Cluster analysis is a fundamental process in statistics and machine learning in which data are grouped into subsets, known as clusters, according to their similarity or dissimilarity (refer to the book by \cite{statbook2021} for a review). The primary goal is to uncover hidden patterns or structures within the data without supplying predefined group labels; thus, it is a form of unsupervised learning. Cluster analysis is widely applied across fields including business, medicine, and the environmental and social sciences. For instance, it is used to identify groups of customers with similar purchasing behavior, group cancer cells by biochemical characteristics, and analyze wildfire risk areas using satellite data (see \cite{exbio,exfire,charoensuk2025rankeddifferencespearsoncorrelation} for examples). Some notable algorithms are K-means clustering, hierarchical clustering, fuzzy C-means clustering, EM clustering, and DBSCAN (see \cite{kmean,hierarchy,fuzzy,em,dbscan} for more details).

Time series clustering is a special case of cluster analysis in which input features are inherently dependent on time. Several specialized algorithms have been proposed for this task. Shape-based clustering using dynamic time warping (DTW) \cite{dtw} stands out for its ability to align sequences by dynamically warping the time axis to minimize the distance between them. This approach is particularly robust to time shifts and variations in speed, making it highly effective for time-dependent data. Correlation-based clustering (CBC) considers the correlation between variables instead of traditional distance measures, such as the Euclidean distance used in K-means or hierarchical clustering. CBC focuses on the pattern similarity of the data and uses a correlation measure such as the Pearson, Spearman, or Kendall correlation. The K-means clustering method assigns data to a fixed number of groups so that the members of each group are closer to its centroid than that of any other group. In contrast, hierarchical clustering builds a tree-like structure that represents the relationships between clusters, which are progressively merged or split to obtain an appropriate clustering. This can be visualized with a diagram called a dendrogram.

Time series clustering has been popular and has several applications lately (see for instance \cite{consumersegmentation,tpt,coclus}). Moreover, many time series clustering techniques have been widely applied in numerous fields. For example, correlation-based hierarchical clustering (see \cite{correlationbasedspatial}) focuses on capturing relationships between time series and spatial constraints, enhancing the accuracy and efficiency of data analysis. DTW has many applications (see \cite{derridtw,tskmean,kmeansusingdtw} for instances). One study \cite{solarkmeans} presented a method for probabilistic solar radiation intensity forecasting by using K-means time series clustering to group data according to solar radiation patterns. The experimental results showed that this method significantly improves forecasting accuracy relative to other techniques. Typically, one of these algorithms is applied in isolation, focusing on its effectiveness in solving specific clustering problems.

Most studies have focused on either the Euclidean distance between two time series, such as Euclidean and DTW approaches, or the association between them through correlation coefficients. However, both perspectives are important in some applications. For instance, it is beneficial to classify electricity consumers on the basis of both monthly usage and behavioral associations. If, for instance, customers A and B have similar average electricity usage but substantially different seasonal patterns, then they should be classified into distinct groups. To illustrate this, Figure~\ref{threepea} shows two pairs of Provincial Electricity Authority (PEA) customers who use similar amounts of electricity overall but with different behavioral patterns. Electricity consumers have been clustered in previous studies, but only with fuzzy C-means and K-means clustering methods (see \cite{ElecclusterbyFCM,kmeansclusteringelectricconsumer}). In \cite{Pearsondis}, it was shown that the Euclidean distance, when applied to scaled data, is equivalent to the Pearson correlation dissimilarity. However, for electricity consumption data, scaling is not appropriate as it affects their background meaning.

\begin{figure}[H]
\centering\includegraphics[width=12cm]{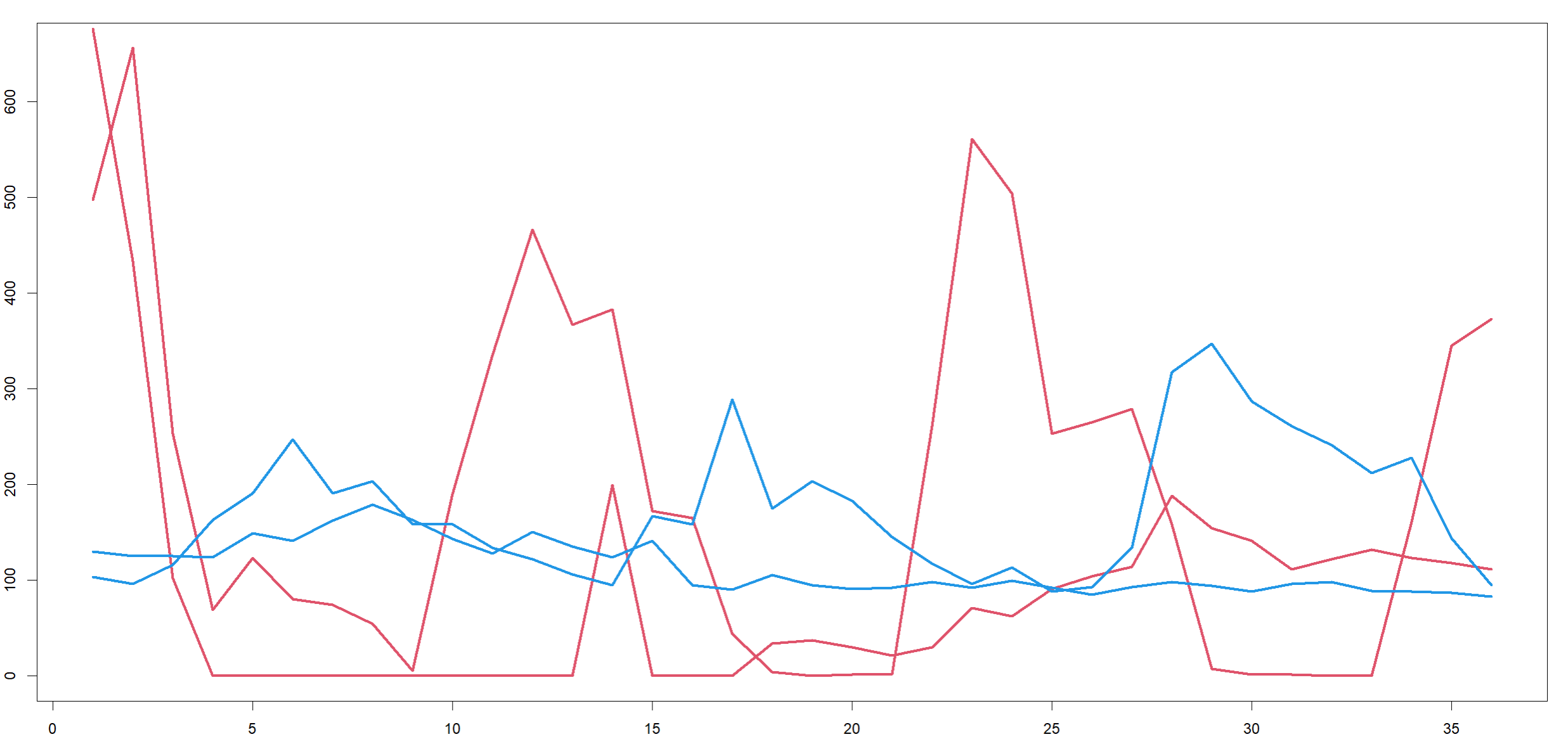}
\caption{Examples of PEA users' behaviors}
\label{threepea}
\end{figure}

This limitation motivates us to introduce a new dissimilarity measure that combines a simple absolute difference and a correlation coefficient, called the ranked-differences Pearson correlation dissimilarity (RDPC). This is defined as an interpolation between the Pearson correlation and a weighted average of a specified fraction of the largest element-wise differences between two time series. The RDPC is then attached to the hierarchical clustering algorithm with different linkages. We test the performance of our dissimilarity on four groups of artificial datasets. The last two groups are intended to imitate electricity consumers' behaviors, as shown in Figure~\ref{threepea} and discussed previously. 

The remainder of this work is organized as follows. Section~\ref{sec:background} provides background information regarding clustering algorithms and dissimilarities. Our proposed dissimilarity measure is defined and its mathematical properties discussed in Section~\ref{sec:main}. Experimental results on simulated datasets are presented in Section~\ref{sec:exp}. Section~\ref{sec:app} considers an application to PEA electricity consumers. The concluding remarks and future research directions are discussed in Section~\ref{sec:conclusion}.

\section{Background} \label{sec:background}

In this section, we review the existing definitions, terms, and methods used in this work, including clustering algorithms, determining the number of clusters, dissimilarity measures, and distances. Most of the clustering algorithms are defined in general terms and are compatible with several distance or dissimilarity measures.

\subsection{Clustering algorithm}
In this subsection, we recall the definitions of hierarchical clustering and K-means clustering.
Let $n,k,p \in \mathbb{N}$, $[n] = \{1,2,\ldots,n\}$, $i\in[n]$, and $j \in [k]$. We establish the following notation used in this work: 
\begin{enumerate} \label{notations}
    \item $X = (X_{1},X_{2},\ldots,X_{n})$: Random data points.
    \item $x = (x_{1},x_{2},\ldots,x_{n})$: Data points.
    \item K: The actual number of clusters.
    \item $C_j$: The set of data points in the $j$th cluster.
    \item $v_j$: The $j$th cluster centroid.
    \item $v_0$: The centroid of the entire dataset.
    \item $\bar{v}$: The centroid of all the $v_j$
    \item $\| x-y \|$: The Euclidean distance between $x$ and $y$.
    \item $\Corr(\cdot,\cdot)$: The correlation coefficient. In this work, we consider only the Pearson correlation.
    \item $e_{i}$: The $i$th elbow point.
\end{enumerate}

\subsubsection{Hierarchical clustering} 
Hierarchical clustering \cite{hierarchy} seeks to build a hierarchy of clusters, often visualized in a tree-like structure called a dendrogram, as shown in Figure~\ref{dendrogram}. It begins by treating each data point as a cluster and iteratively merging the nearest clusters to form a hierarchy using a distance or a dissimilarity measure. The dissimilarity between two clusters is defined using a linkage such as the complete, average, or single linkage. In this work, we use the complete linkage, defined as the maximal intercluster dissimilarity, by computing all pairwise dissimilarities between the observations in two clusters and recording the largest one. The final number of clusters is determined at the end by cutting the dendrogram at a specified height. 

\begin{figure}[H]
\centering\includegraphics[width=16cm]{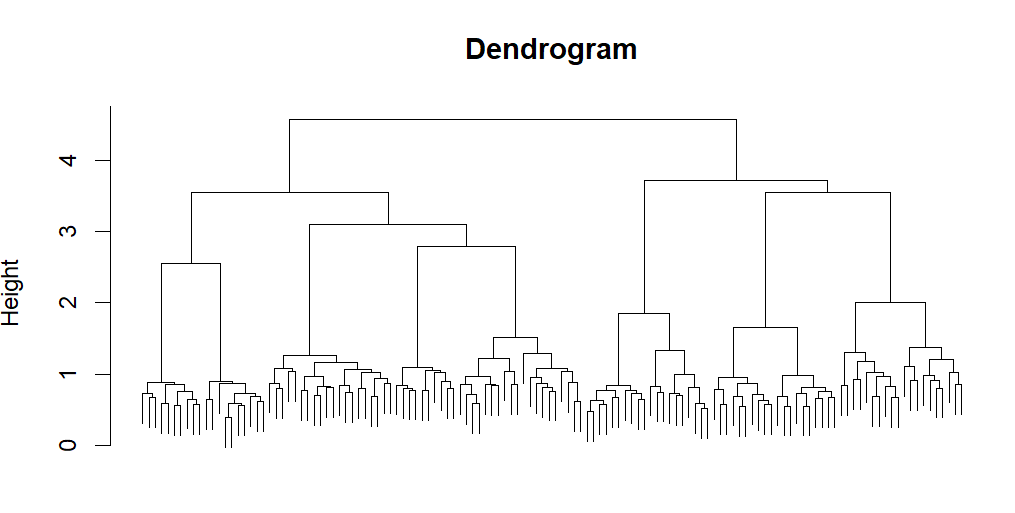}
\caption{Dendrogram}
\label{dendrogram}
\end{figure}

\subsubsection{K-means}
The K-means method \cite{kmean} is a simple yet efficient clustering algorithm. It operates by partitioning a dataset into $k$ clusters, where $k$ is a user-defined parameter. The algorithm commences by randomly initializing cluster centroids. It then assigns each data point to the nearest centroid and updates the centroids based on the newly assigned points. This iterative process continues until the cluster centroids converge. The objective of K-means clustering is to minimize the within-cluster sum of squares, expressed as

\begin{equation*}
    \sum_{j=1}^k\sum_{x \in C_j} \|x-v_j\|^2 .
\end{equation*}

\subsection{Elbow method}
Elbow method \cite{elbow} is a technique used in data analysis and machine learning to identify the optimal number of clusters within a dataset. The method works by plotting the Within-Cluster dissimilarity against the number of clusters (K) and observing the point where the rate of decrease sharply levels off, creating a distinctive elbow shape. This point indicates the candidate number of clusters to use for the data. An example is shown in Figure \ref{background elbow}.

\begin{figure}[H]
\centering\includegraphics[width=12cm]{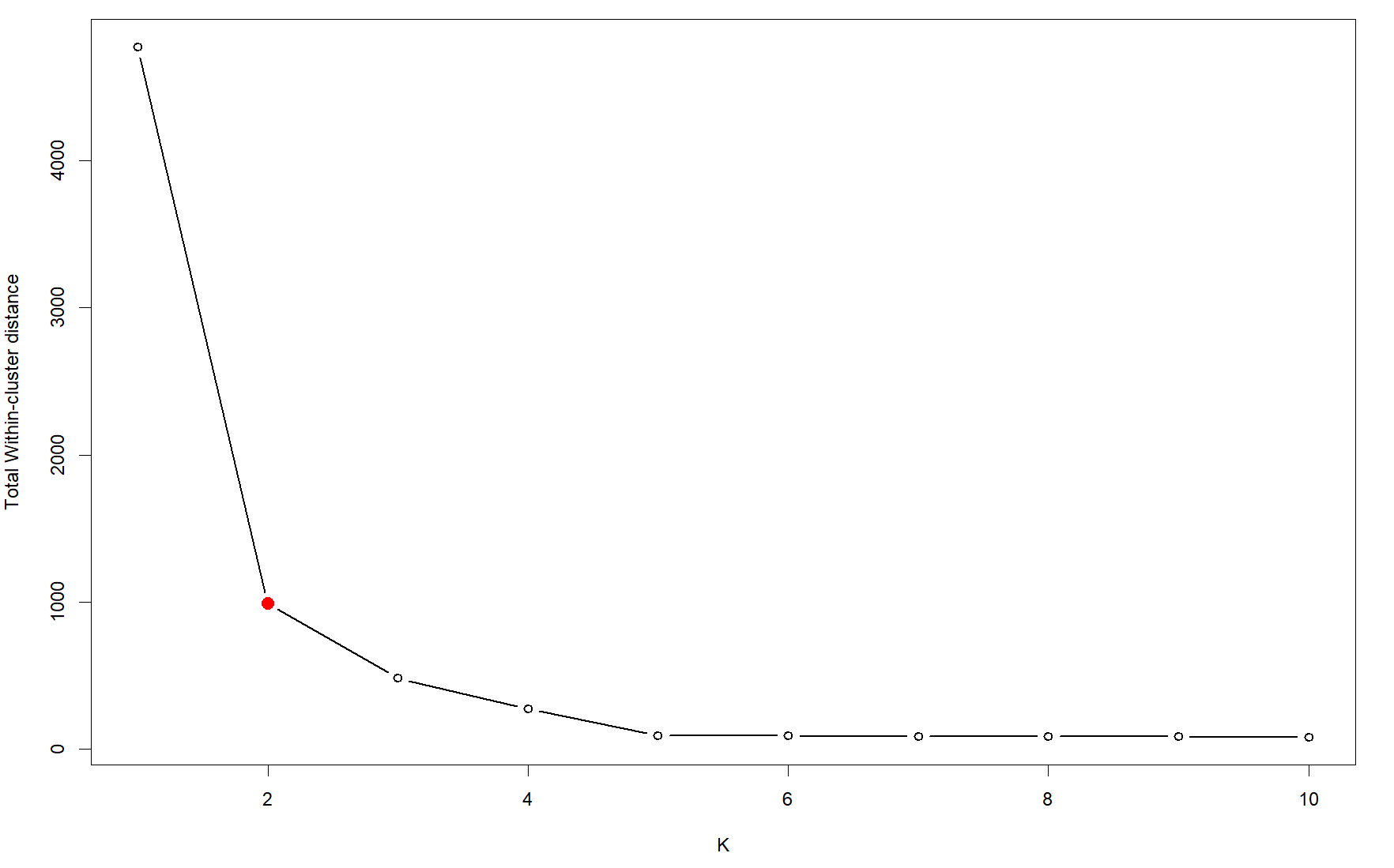}
\caption{Elbow point}
\label{background elbow}
\end{figure}

\subsection{Dissimilarity measures}
In this subsection, we introduce the distance and dissimilarity measures used in this work.

\subsubsection{Dynamic time warping}

DTW \cite{dtw} is a powerful algorithm used in time series analysis to measure the similarity between two temporal sequences. Unlike traditional distance metrics like the Euclidean distance, DTW can handle sequences of different lengths. As shown in Figure~\ref{warping}, when two sequences are not synchronized, it warps their time axes to align them in a way that minimizes the distance between corresponding points.

\begin{figure}[H]
\centering\includegraphics[width=8cm]{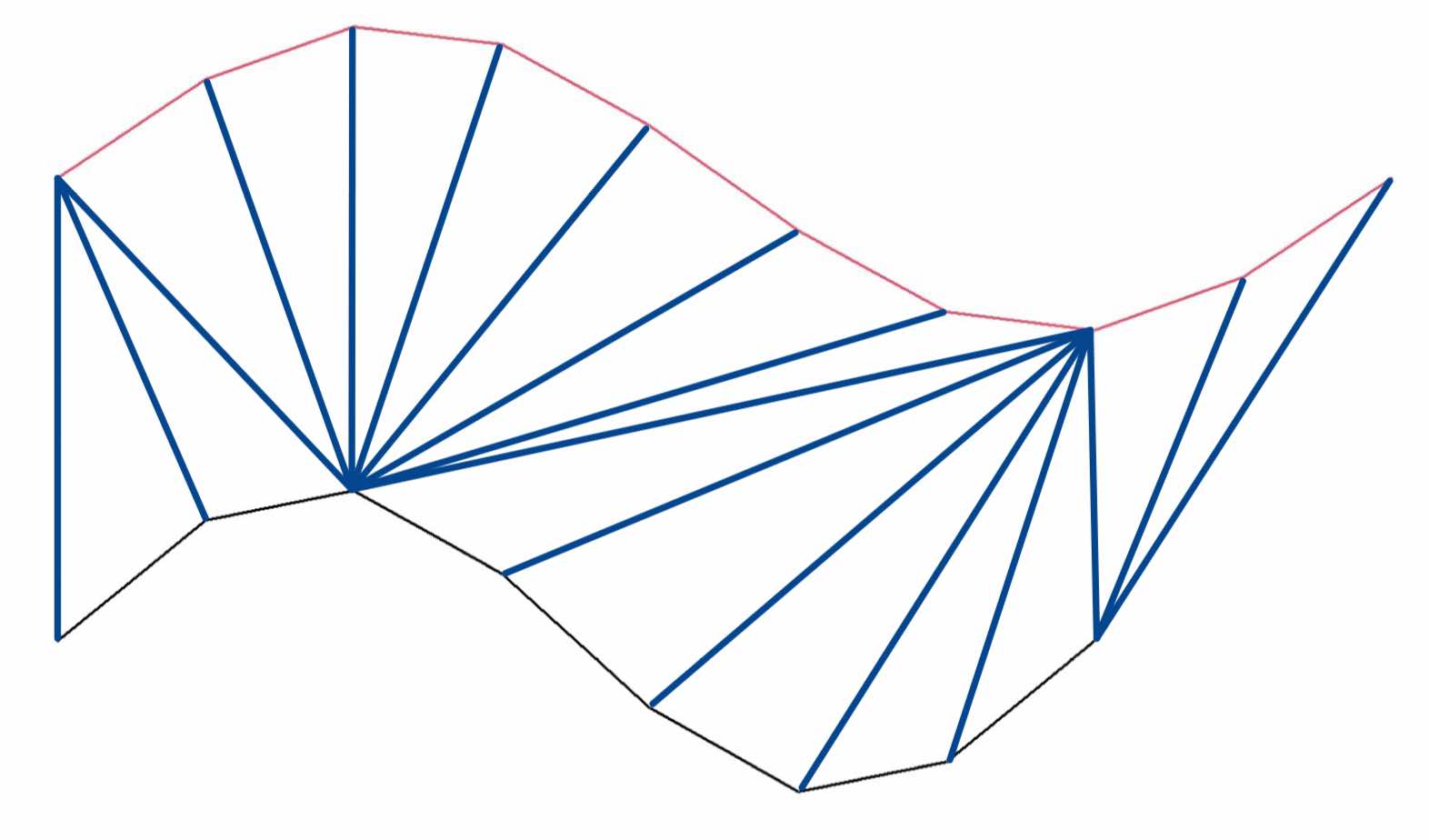}
\caption{DTW Alignment}
\label{warping}
\end{figure}

\subsubsection{Euclidean distance}

Let $x ,y \in \mathbb{R}^n$. The Euclidean distance between $x$ and $y$ is defined as  
\begin{equation}
||x-y|| = \sqrt{(x_{1}-y_{1})^2+(x_{2}-y_{2})^2+\ldots+(x_{n}-y_{n})^2}.
\end{equation}

\subsubsection{Global alignment kernel}
The global alignment kernel (GAK) \cite{GAK} is a kernel function used primarily for sequential data, such as time series or strings. It is based on DTW and measures the similarity between two sequences by aligning them in a way that minimizes the distance between their elements. The difference between GAK and DTW is that GAK is commonly used to compare sequences of similar length, considering the entire sequence. In contrast, DTW is preferred when sequences have different lengths or when events occur at uneven rates. GAK is particularly useful in applications like bioinformatics and speech recognition.

\subsubsection{Pearson correlation dissimilarity}
The Pearson correlation coefficient quantifies the degree to which a linear relationship exists between two continuous variables. The coefficient ranges from $-1$ to 1, where 1 indicates a perfect positive linear relationship,
$-1$ indicates a perfect negative linear relationship, and 0 indicates no linear relationship. The correlation between two vectors $x = (x_{1},x_{2},\ldots,x_{n})$ and $y= (y_{1},y_{2},\ldots,y_{n})$ is defined as

\begin{equation*}
\Corr(x,y) = \frac{\frac{1}{n}\sum_{i=1}^n(x_i-\bar{x})(y_i-\bar{y})}{s_{x}s_{y}},
\end{equation*}
where $\bar{x}$ and $\bar{y}$ denote the means of $x$ and $y$, respectively, and $s_{x}$ and $s_{y}$  denote their standard deviations.

The Pearson correlation dissimilarity \cite{Pearsondis} is defined as

\begin{equation}
d_{P}(x,y) = 1 - \Corr(x,y).
\end{equation}

In this work, we compare the combinations of clustering algorithms and dissimilarity measures shown in Table~\ref{5clmethods}, including the proposed RDPC measure.

\begin{table*}[h]
\centering
\resizebox{12cm}{!}{%
\begin{tabular}{|c|c|}
\hline
\multicolumn{2}{|c|}{\textbf{Clustering Method}} \\
\hline
\textbf{Clustering algorithm} & \textbf{Distance or dissimilarity measure} \\
\hline
 & Pearson correlation \\
Hierarchical clustering  & DTW\\
& GAK \\
& RDPC \\
\hline
K-means & Euclidean distance \\
\hline
\end{tabular}%
}
\caption{Combinations of clustering algorithms and dissimilarity measures}
\label{5clmethods}
\end{table*}

\section{Main results}\label{sec:main}
In this section, we define our new dissimilarity measure and discuss its mathematical properties.    

\subsection{Definition}

Our newly introduced dissimilarity is a combination of a weighted average of a specified fraction of the largest element-wise differences and the well-known Pearson correlation dissimilarity measure. It is defined in the following two definitions.

\begin{definition}
Let $x, y \in \mathbb{R}^n$ and $R_i = |x_i-y_i|$ for all $i$. Letting $0\le p \le 1$, $r = \lceil pn \rceil$, $\vec{w} = (w_{1},w_{2},\ldots,w_{r}) \in \mathbb{R}^n$ be such that $w_i > 0$, and $\sum_{i=1}^{r} w_i = 1$, RankDiff is defined as
\begin{equation*}
\text{RankDiff}(x,y|p,\vec{w}) = \sum_{j=1}^r w_jR_{(n-j+1)}
\end{equation*}
where $R_{(j)}$ denotes the $j$th order statistic.

\end{definition}

\begin{definition}\label{RDPCdef}
Let $x, y \in \mathbb{R}^n$ and $0 \leq\alpha\leq 1$. $d_{RDPC}$ is defined as  
\begin{equation*}
d_{RDPC}(x,y|\alpha, p,\vec{w}) = \alpha\text{RankDiff}(x,y|p,\vec{w}) + (1-\alpha)d_{P}(x,y).
\end{equation*}
\end{definition}

Next, we discuss some reasonable choices of $\vec{w}$ and the choice we take for the rest of the work.
\begin{remark}
It is reasonable to take a monotone sequence of $w_j$. For instance,
\beas
w_j = \frac{2j}{r(r+1)} \text{ \ \ or \ \ } w_j = \frac{2(n-j+1)}{r(r+1)}
\enas
which correspond to the cases in which we rely more on the small or the large differences, respectively. However, for simplicity, we choose the uniform weights $w_j = 1/r$ for the remainder of this work.
\end{remark}

\subsection{Mathematical properties} 

In this subsection, we state and prove some mathematical properties of the RDPC dissimilarity. We first recall the definition of a metric or distance on a metric space $M$.

\begin{definition}
    Let $M$ be a metric space. A metric $d$ is a function $d:M \times M \rightarrow \mathbb{R}$ that satisfies the following axioms:
\begin{enumerate}
    \item[(M1)] $d(x,y) \geq 0$
    \item[(M2)] $d(x,y) = 0 \iff x = y$
    \item[(M3)] $d(x,y) = d(y,x)$
    \item[(M4)] $d(x,y) = d(x,z) + d(z,y)$,
\end{enumerate}
for all $x, y, z \in M$.
\end{definition}

Next, we consider which of these axioms are satisfied by $d_{RDPC}$.

\begin{proposition}
   $d_{RDPC}$ as defined in Definition~\ref{RDPCdef} satisfies \textit{(M1)}.
\end{proposition}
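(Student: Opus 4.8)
The plan is to observe that $d_{RDPC}$ is a convex combination of two quantities, each of which is separately nonnegative. Since Definition~\ref{RDPCdef} writes $d_{RDPC}(x,y|\alpha,p,\vec{w}) = \alpha\,\text{RankDiff}(x,y|p,\vec{w}) + (1-\alpha)d_P(x,y)$ with $0 \le \alpha \le 1$, both coefficients $\alpha$ and $1-\alpha$ are nonnegative. It therefore suffices to verify that each of $\text{RankDiff}(x,y|p,\vec{w})$ and $d_P(x,y)$ is nonnegative, after which the claim follows at once because a sum of products of nonnegative factors is nonnegative.

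First I would dispatch the RankDiff term, which is entirely routine. By definition $\text{RankDiff}(x,y|p,\vec{w}) = \sum_{j=1}^r w_j R_{(n-j+1)}$, where each $R_i = |x_i - y_i| \ge 0$ and hence every order statistic $R_{(n-j+1)} \ge 0$. Since the weights satisfy $w_j > 0$, every summand is nonnegative, so the finite sum is nonnegative as well.

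The second term carries the only substantive content, namely the classical fact that the Pearson correlation coefficient never exceeds $1$. I would recall that $\Corr(x,y) = \frac{\frac{1}{n}\sum_{i=1}^n (x_i - \bar{x})(y_i - \bar{y})}{s_x s_y}$ and apply the Cauchy--Schwarz inequality to the centered vectors $(x_i - \bar{x})_i$ and $(y_i - \bar{y})_i$, yielding $\left| \frac{1}{n}\sum_{i=1}^n (x_i - \bar{x})(y_i - \bar{y}) \right| \le s_x s_y$. This gives $\Corr(x,y) \le 1$, and therefore $d_P(x,y) = 1 - \Corr(x,y) \ge 0$.

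Combining the two bounds, $d_{RDPC}$ is a nonnegative linear combination of nonnegative terms, which establishes \textit{(M1)}. The only point deserving a word of care is the implicit requirement that $s_x, s_y > 0$, so that $\Corr(x,y)$ and hence $d_P$ are well defined; I would note that the degenerate case of a constant vector lies outside the domain of the correlation and must be excluded or fixed by convention, but this concerns the domain of definition rather than posing any genuine obstacle to the inequality itself. I do not expect any real difficulty here, as the argument is a direct bound-and-combine.
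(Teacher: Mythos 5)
Your proof is correct and takes essentially the same approach as the paper's: both treat $d_{RDPC}$ as a nonnegative combination of the RankDiff term and the Pearson dissimilarity $d_P$, and bound each term below by zero. The only difference is that you spell out the Cauchy--Schwarz argument for $\Corr(x,y)\le 1$ (and flag the degenerate constant-vector case), whereas the paper simply cites the standard bound $0\le d_P(x,y)\le 2$.
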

 \begin{proof}
Since $0\leq \alpha \leq1$, $0\leq 1-\alpha \leq1$, $\text{RankDiff}(x,y|p,\vec{w}) \geq 0$, and $0\leq d_{P}(x,y) \leq 2$,  we have that $d_{RDPC}(x,y|\alpha, p,\vec{w}) = \alpha\text{RankDiff}(x,y|p,\vec{w}) + (1-\alpha)d_{P}(x,y) \geq0$.
 \end{proof}
 
\begin{proposition}
$d_{RDPC}$ as defined in Definition~\ref{RDPCdef} satisfies \textit{(M2)} only for $\alpha>0$.
\end{proposition}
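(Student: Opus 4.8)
The plan is to prove both implications of the biconditional (M2) under the assumption $\alpha > 0$, and then separately exhibit a counterexample showing that (M2) can fail when $\alpha = 0$, which is what forces the restriction. Throughout I would reuse the nonnegativity facts already established in the proof of (M1): namely $\text{RankDiff}(x,y|p,\vec{w}) \ge 0$ and $d_{P}(x,y) \ge 0$.

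For the easy direction I would suppose $x = y$. Then $R_i = |x_i - y_i| = 0$ for every $i$, so every order statistic $R_{(n-j+1)}$ vanishes and hence $\text{RankDiff}(x,y|p,\vec{w}) = \sum_{j=1}^r w_j R_{(n-j+1)} = 0$. Moreover $\Corr(x,x) = 1$, so $d_{P}(x,x) = 0$, and therefore $d_{RDPC}(x,y|\alpha,p,\vec{w}) = \alpha \cdot 0 + (1-\alpha)\cdot 0 = 0$, regardless of the value of $\alpha$.

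For the substantive direction I would suppose $d_{RDPC}(x,y|\alpha,p,\vec{w}) = 0$ with $\alpha > 0$. Since $d_{RDPC}$ is a sum of the two nonnegative terms $\alpha\,\text{RankDiff}$ and $(1-\alpha)d_{P}$, each must individually vanish; in particular $\alpha\,\text{RankDiff}(x,y|p,\vec{w}) = 0$, and because $\alpha > 0$ this forces $\text{RankDiff}(x,y|p,\vec{w}) = 0$. Now $\text{RankDiff}$ is a sum $\sum_{j=1}^r w_j R_{(n-j+1)}$ of nonnegative summands with strictly positive weights $w_j$, so every $R_{(n-j+1)}$ with $1 \le j \le r$ must equal zero. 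In particular the $j=1$ term alone gives $R_{(n)} = 0$, i.e. the largest element-wise difference is zero; since $0 \le R_{(1)} \le \cdots \le R_{(n)} = 0$, all the $R_i$ vanish, whence $|x_i - y_i| = 0$ for every $i$ and thus $x = y$. This completes the argument that (M2) holds when $\alpha > 0$.

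Finally, to justify the necessity of $\alpha > 0$, I would set $\alpha = 0$, so that $d_{RDPC}(x,y|0,p,\vec{w}) = d_{P}(x,y) = 1 - \Corr(x,y)$, and then take any non-constant $x$ together with $y = 2x$: here $\Corr(x,y) = 1$, so $d_{RDPC} = 0$ even though $x \ne y$, violating (M2). I do not expect a genuine conceptual obstacle; the only delicate points are bookkeeping ones. The forward direction tacitly requires $r \ge 1$ (equivalently $p > 0$) so that the $j=1$ order statistic is actually present in the sum, and the reverse direction invokes $\Corr(x,x) = 1$, which presumes $x$ is non-constant so that $s_x \ne 0$; I would handle this either by restricting attention to non-constant vectors or by adopting the convention $\Corr(x,x) = 1$ in the degenerate case.
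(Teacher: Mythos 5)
Your proof is correct and follows essentially the same route as the paper: for $\alpha = 0$ the paper exhibits the same kind of counterexample (it takes $x = (1,2)$, $y = (2,4)$, an instance of your $y = 2x$), and for $\alpha > 0$ it likewise uses positivity of the weights $w_j$ to force $R_{(n)} = 0$ and hence $x = y$. Your write-up is somewhat more careful than the paper's in making the backward implication explicit and in flagging the edge cases ($p > 0$ so that $r \ge 1$, and non-constant vectors so that $\Corr(x,x)$ is defined), but these are refinements of the same argument rather than a different approach.
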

 \begin{proof}
 \textbf{Case $\alpha = 0$:} Consider $x = (1,2)$ and $y = (2,4)$.  It can be seen that $x \ne y$ but $d_{RDPC}(x,y|0, p,\vec{w}) = 1-\Corr(x,y) = 1-1 = 0.$   

\textbf{Case $\alpha = 1$:} Since $w_j>0$, it can be seen that $d_{RDPC}(x,y|1, p,\vec{w}) = \sum_{j=1}^r w_jR_{(n-j+1)} = 0 $ if and only if $R_{(n)} = 0$. This implies that $R_i = |x_i-y_i| = 0$ for all $i = 1, 2,\ldots, n$, which yields $x = y$. 

\textbf{Case $0<\alpha<1$:} $d_{RDPC}(x,y|\alpha, p,\vec{w})  = 0$ only if $\sum_{j=1}^r w_jR_{(n-j+1)} = 0 $. This reduces to the previous case.

\end{proof}

\begin{proposition}
 $d_{RDPC}$ as defined in Definition~\ref{RDPCdef} satisfies \textit{(M3)}.
\end{proposition}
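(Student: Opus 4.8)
The plan is to verify symmetry componentwise. Since $d_{RDPC}(x,y|\alpha,p,\vec{w})$ is an affine combination of $\text{RankDiff}(x,y|p,\vec{w})$ and $d_{P}(x,y)$ with coefficients $\alpha$ and $1-\alpha$ that do not depend on the order in which the two vectors are fed in, it suffices to establish that each of the two pieces is individually symmetric. If I show that $\text{RankDiff}(x,y|p,\vec{w}) = \text{RankDiff}(y,x|p,\vec{w})$ and $d_{P}(x,y) = d_{P}(y,x)$, then adding these symmetric quantities with the same weights yields $d_{RDPC}(x,y|\alpha,p,\vec{w}) = d_{RDPC}(y,x|\alpha,p,\vec{w})$.

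For the RankDiff term, the key observation is that the element-wise differences enter only through their absolute values. Writing $R_i = |x_i - y_i|$ and $R_i' = |y_i - x_i|$, we have $R_i = R_i'$ for every $i$ because $|a| = |-a|$. Hence the vectors $(R_1,\ldots,R_n)$ and $(R_1',\ldots,R_n')$ are identical entrywise, so in particular their order statistics coincide, $R_{(n-j+1)} = R'_{(n-j+1)}$ for all $j$. Since the weights $w_j$ and the cutoff $r = \lceil pn \rceil$ are unaffected by swapping the arguments, the weighted sum $\sum_{j=1}^r w_j R_{(n-j+1)}$ is the same in both orders.

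For the Pearson dissimilarity term, it suffices to recall that the Pearson correlation itself is symmetric: the numerator $\frac{1}{n}\sum_{i=1}^n (x_i-\bar{x})(y_i-\bar{y})$ is unchanged under interchanging $x$ and $y$ since multiplication is commutative, and the denominator $s_x s_y = s_y s_x$ is likewise symmetric. Thus $\Corr(x,y) = \Corr(y,x)$, and therefore $d_{P}(x,y) = 1 - \Corr(x,y) = 1 - \Corr(y,x) = d_{P}(y,x)$. Combining the two symmetric pieces gives the claim.

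I do not anticipate a genuine obstacle here, as the statement reduces to two elementary symmetries. The only point worth stating with care is why equality of the collections of differences forces equality of the order statistics: this is immediate once one notes that the two difference vectors are literally equal entrywise, not merely equal as multisets, so no appeal to rearrangement is needed.
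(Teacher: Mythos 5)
Your proof is correct and follows essentially the same route as the paper's: symmetry of each component, using $|x_i-y_i|=|y_i-x_i|$ for the RankDiff term and symmetry of the Pearson correlation for $d_P$, then combining with the fixed weights $\alpha$ and $1-\alpha$. Your version simply spells out details (equality of order statistics, commutativity in the correlation) that the paper leaves implicit.
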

 \begin{proof}
    Since $R_i = |x_i-y_i| = |y_i-x_i|$ and $d_{P}(x,y) = 1-\Corr(x,y) = 1-\Corr(y,x) = d_{P}(y,x)$, $d_{RDPC}(x,y|\alpha, p,\vec{w}) = d_{RDPC}(y,x|\alpha, p,\vec{w})$.      
 \end{proof}

\begin{proposition}
$d_{RDPC}$ as defined in Definition~\ref{RDPCdef} satisfies \textit{(M4)} when $\alpha=1$ but not necessarily in other cases.
\end{proposition}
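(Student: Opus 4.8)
The plan is to handle the two regimes separately, reading (M4) as the triangle inequality $d(x,y)\le d(x,z)+d(z,y)$ (the displayed equality in the axiom list is evidently a typo).

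For $\alpha=1$ I would use that $d_{RDPC}(x,y|1,p,\vec w)=\text{RankDiff}(x,y|p,\vec w)$, and that with the uniform weights $w_j=1/r$ fixed in the Remark this equals $\tfrac1r T_r(x-y)$, where $T_r(a):=\sum_{j=1}^r|a|_{(n-j+1)}$ is the sum of the $r$ largest absolute entries of $a\in\mathbb{R}^n$. The key step is the variational identity $T_r(a)=\max_{|S|=r}\sum_{i\in S}|a_i|$. Setting $u=x-z$ and $v=z-y$ so that $x-y=u+v$, I would take a maximizing index set $S^\star$ for $T_r(u+v)$ and estimate $\sum_{i\in S^\star}|u_i+v_i|\le\sum_{i\in S^\star}|u_i|+\sum_{i\in S^\star}|v_i|\le T_r(u)+T_r(v)$, where the first inequality is the scalar triangle inequality and the second uses that $T_r$ is the maximum over all size-$r$ subsets. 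Dividing by $r$ gives $\text{RankDiff}(x,y)\le\text{RankDiff}(x,z)+\text{RankDiff}(z,y)$, i.e. (M4). The same argument extends to any non-increasing weight vector by writing $\text{RankDiff}$ as a nonnegative combination of the $T_k$ via Abel summation.

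For the failure when $\alpha<1$ I would exhibit one explicit configuration, exploiting that $d_P$ alone violates the triangle inequality. Take $n=3$ and $x=(1,-1,0)$, $z=(1,0,-1)$, $y=(0,1,-1)$, all of mean zero, for which $\Corr(x,z)=\Corr(z,y)=\tfrac12$ and $\Corr(x,y)=-\tfrac12$, so $d_P(x,y)=\tfrac32$ while $d_P(x,z)+d_P(z,y)=1$. Choosing $p=1$ (uniform weights, so $\text{RankDiff}$ is the mean absolute difference), I would note that here $|x_i-y_i|=|x_i-z_i|+|z_i-y_i|$ coordinatewise, so the $\text{RankDiff}$ terms satisfy (M4) with equality and cannot compensate. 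Computing both sides then yields $d_{RDPC}(x,y)-[d_{RDPC}(x,z)+d_{RDPC}(z,y)]=\tfrac12(1-\alpha)$, which is strictly positive for every $\alpha<1$; this single example shows (M4) need not hold.

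I expect the only genuinely nontrivial ingredient to be the subadditivity of the top-$r$-sum $T_r$; once its variational formula is in hand, everything else is routine arithmetic. The subtlety in the counterexample is to pick a configuration where the correlation part overshoots the triangle inequality while the $\text{RankDiff}$ part is exactly tight, so that the violation persists across the whole range $0\le\alpha<1$ rather than only for small $\alpha$.
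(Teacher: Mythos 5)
Your proof is correct. For $\alpha=1$ it is essentially the paper's argument in different notation: the paper takes the indices $i_1,\dots,i_r$ realizing the top-$r$ values of $R_i=|x_i-y_i|$, applies the scalar triangle inequality termwise, and then bounds each of the two resulting sums by the corresponding sum over the top-$r$ order statistics; your maximizing set $S^\star$ and the variational formula $T_r(a)=\max_{|S|=r}\sum_{i\in S}|a_i|$ are exactly that step. One point in your favor: the paper runs this with general weights $w_j$, but its final inequality $\sum_j w_jR^{xz}_{i_j}\le\sum_j w_jR^{xz}_{(n-j+1)}$ is only valid for non-increasing weights --- with increasing weights, which the paper's own remark contemplates, the $\alpha=1$ triangle inequality itself can fail (take $n=r=2$, $w=(0.1,0.9)$, $x=(1,0)$, $z=(0,0)$, $y=(0,1)$: then $\mathrm{RankDiff}(x,y)=1$ but $\mathrm{RankDiff}(x,z)+\mathrm{RankDiff}(z,y)=0.2$) --- so your restriction to uniform weights, with the Abel-summation extension to non-increasing ones, is the correct scope. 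Where you genuinely diverge is the counterexample for $\alpha<1$: the paper lists three numerical triples in $\mathbb{R}^3$, one for each range $0<p\le 1/3$, $1/3<p\le 2/3$, $2/3<p\le 1$, so that failure is exhibited for every pair $(\alpha,p)$ with $\alpha<1$ and $p>0$; you instead build a single symbolic triple in which the coordinatewise differences are exactly additive, so the RankDiff terms cancel and the Pearson overshoot persists for every $\alpha<1$. Your arithmetic checks out ($\Corr(x,z)=\Corr(z,y)=\tfrac12$, $\Corr(x,y)=-\tfrac12$, deficit $(1-\alpha)/2$). The trade-off: your example is cleaner and uniform in $\alpha$, but it is tied to $p=1$ (for $1/3<p\le 2/3$ it violates \textit{(M4)} only when $\alpha<1/2$), whereas the paper's three examples establish the stronger claim that \textit{(M4)} can fail for every positive $p$; both versions suffice for the proposition as literally stated.
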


\begin{proof}
We first show that $d_{RDPC}$ satisfies \textit{(M4)} for  $\alpha = 1$. Let $n \in \mathbb{N}$, and let $x = (x_{1},x_{2},\ldots,x_{n})$, $y= (y_{1},y_{2},\ldots,y_{n})$, and $z= (z_{1},z_{2},\ldots,z_{n})$ be three vectors in $\mathbb{R}^n$. Then $R_i = |x_i-y_i| = |x_i-z_i+z_i-y_i| \leq |x_i-z_i| + |z_i-y_i| := R^{xz}_{i} + R^{zy}_{i}$. Letting $R_{(n-j+1)} = R_{i_j}$ for $j=1,2,\ldots,r$, we have

\beas
 d_{RDPC}(x,y|1, p,\vec{w}) &=& \sum_{j=1}^r w_jR_{(n-j+1)} = \sum_{j=1}^r w_jR_{i_j} \\
                            &\le& \sum_{j=1}^r w_jR^{xz}_{i_j} + \sum_{j=1}^r w_jR^{zy}_{i_j} \\
                            &\le& \sum_{j=1}^r w_jR^{xz}_{(n-j+1)} + \sum_{j=1}^r w_jR^{zy}_{(n-j+1)} \\
                            &=& d_{RDPC}(x,z|1, p,\vec{w}) + d_{RDPC}(z,y|1, p,\vec{w}).
\enas
This completes the proof for $\alpha=1$.

Next, we give counterexamples to show that $d_{RDPC}$ does not necessarily satisfy \textit{(M4)} for $0\leq\alpha<1$ and $p>0$.

Let $x_1 = (1 ,-1.3  ,-0.7)$, $y_1 =(-0.9,-0.3,-1)$, $z_1 =(-0.2,-0.3,-0.3)$, $x_2 = (-2.8  ,0.5,  1.4)$, $y_2 =(1.0  ,0.3 ,-1.2)$, $z_2 =(-0.9  ,0.8 ,-0.5)$ $x_3 = (-0.2 ,0.4  ,0.1)$, $y_3 =(-2.2,-0.5,-2.0)$, and $z_3 =(0.8,0.0,-1.8)$. These three examples violate (M4) in the cases $0<p\le 1/3$, $1/3 < p \le 2/3$, and $2/3 < p \le 1$, respectively.

\end{proof}
 
\begin{theorem}
Let $X, Y \in \mathbb{R}^n$ be independent sample points from two independent distributions with finite means and variances. Then we have the following:
\begin{itemize}
\item[(a)] $d_{RDPC}(X,Y|0, p,\vec{w})$ converges to one almost surely as $n \rightarrow \infty$.
\item[(b)] $d_{RDPC}(X,Y|1, 1,\vec{w})$ converges to $\E|X_1-Y_1|$ almost surely as $n \rightarrow \infty$.
\end{itemize}

\end{theorem}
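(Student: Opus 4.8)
The plan is to handle the two parts separately, since each reduces to the strong law of large numbers (SLLN) once the parameters are specialized. Throughout I read the hypotheses as: $X_1,\dots,X_n$ are i.i.d.\ from one distribution, $Y_1,\dots,Y_n$ are i.i.d.\ from another, the $X$-sample is independent of the $Y$-sample, and both distributions are nondegenerate (strictly positive variances $\sigma_X^2,\sigma_Y^2$); consequently the pairs $(X_i,Y_i)$ are i.i.d.\ across $i$, and all the averages below are averages of i.i.d.\ integrable summands.

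For part (a), since $\alpha=0$ we have $d_{RDPC}(X,Y|0,p,\vec{w}) = d_P(X,Y) = 1-\Corr(X,Y)$, so it suffices to show the sample correlation tends to $0$ almost surely. I would write it as the sample covariance divided by the product of the sample standard deviations and apply the SLLN to each constituent average. The diagonal pieces give $\bar X\to\mu_X$, $\bar Y\to\mu_Y$, $\tfrac1n\sum X_i^2\to\E X_1^2$, $\tfrac1n\sum Y_i^2\to\E Y_1^2$ almost surely, hence $s_X^2\to\sigma_X^2$ and $s_Y^2\to\sigma_Y^2$. For the cross term, $\tfrac1n\sum_i X_iY_i\to\E[X_1Y_1]$ almost surely, the summands being integrable because $\E|X_1Y_1|=\E|X_1|\,\E|Y_1|<\infty$ by independence. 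Therefore the sample covariance satisfies $\tfrac1n\sum X_iY_i-\bar X\bar Y\to\E[X_1Y_1]-\mu_X\mu_Y=0$, where the last equality is precisely where independence enters through $\E[X_1Y_1]=\E X_1\,\E Y_1$. Combining these almost-sure limits and using $\sigma_X\sigma_Y>0$, the ratio converges to $0$, so $d_P\to 1$.

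Part (b) is more direct. With $\alpha=1$ and $p=1$ we get $r=\lceil n\rceil=n$ and uniform weights $w_j=1/n$, so summing over all order statistics equals summing over the original values:
\begin{equation*}
d_{RDPC}(X,Y|1,1,\vec{w}) = \sum_{j=1}^n \frac1n R_{(n-j+1)} = \frac1n\sum_{i=1}^n R_i = \frac1n\sum_{i=1}^n |X_i-Y_i|.
\end{equation*}
The variables $|X_i-Y_i|$ are i.i.d.\ and integrable since $\E|X_1-Y_1|\le\E|X_1|+\E|Y_1|<\infty$, so a single application of the SLLN gives convergence to $\E|X_1-Y_1|$ almost surely.

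The routine steps are the individual SLLN applications and the algebra of almost-sure limits. The only genuine subtlety lies in part (a): one must (i) invoke independence to force the limiting covariance to vanish and (ii) ensure the limiting denominator $\sigma_X\sigma_Y$ is nonzero, which is exactly why nondegeneracy is needed for the correlation to be well-defined in the limit. I expect this denominator/nondegeneracy point to be the main thing to state carefully, since without it the ratio is a $0/0$ indeterminacy.
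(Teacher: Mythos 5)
Your proof is correct and follows essentially the same route as the paper's: both parts reduce to the strong law of large numbers, with independence forcing the limiting covariance to vanish in part (a) and the order statistics rearranging into a plain average of the $|X_i-Y_i|$ in part (b). Your observation that the variances must be strictly positive (nondegeneracy) for the limiting correlation to be well-defined is a small refinement that the paper leaves implicit.
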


\begin{proof}
For (a), we have
\beas
d_{RDPC}(X,Y|0, p,\vec{w})= d_P(X,Y)  = 1-\Corr(X,Y) = 1-\frac{\sum_{j=1}^n X_jY_j/n-\bar{X}\bar{Y}}{ S_X S_Y} .
\enas
By the strong law of large numbers, the last expression converges almost surely to 
\beas
1 - \frac{\E X_{1}Y_{1} - \E X_{1} \E Y_{1}}{\sqrt{\Var(X_1)\Var(Y_1)}} = 1- \frac{\Cov(X_1,Y_1)}{\sqrt{\Var(X_1)\Var(Y_1)}} = 1,
\enas
as $n \rightarrow \infty$. The last equality holds since $X$ and $Y$ are independent.

For (b), it can be seen that
\beas
d_{RDPC}(X,Y|1, 1,\vec{w}) = \text{RankDiff}(X,Y|1,\vec{w}) = \sum_{j=1}^n \frac{R_{(n-j+1)}}{n} = \sum_{j=1}^n \frac{R_{j}}{n} = \sum_{j=1}^n \frac{|X_j-Y_j|}{n}.
\enas
By the strong law of large numbers, the last expression converges almost surely to $
\E|X_1-Y_1|$ as $n \rightarrow \infty$. 

\end{proof}

\begin{figure*}[h]
\centering
\resizebox{15cm}{!}{%
\begin{tabular}{ccc}
\hline\hline
\\
\centering\includegraphics[width=2cm]{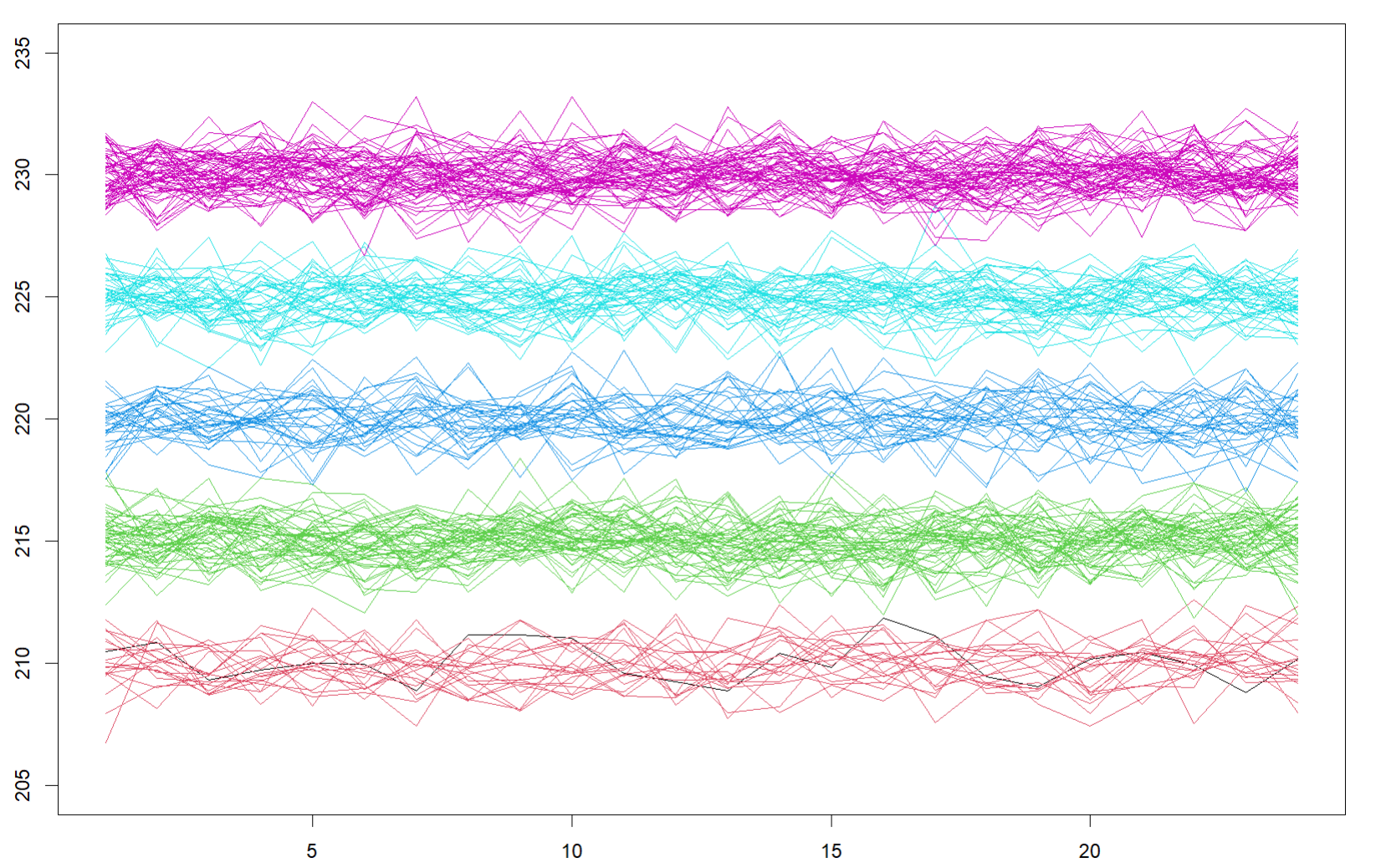} & \includegraphics[width=2cm]{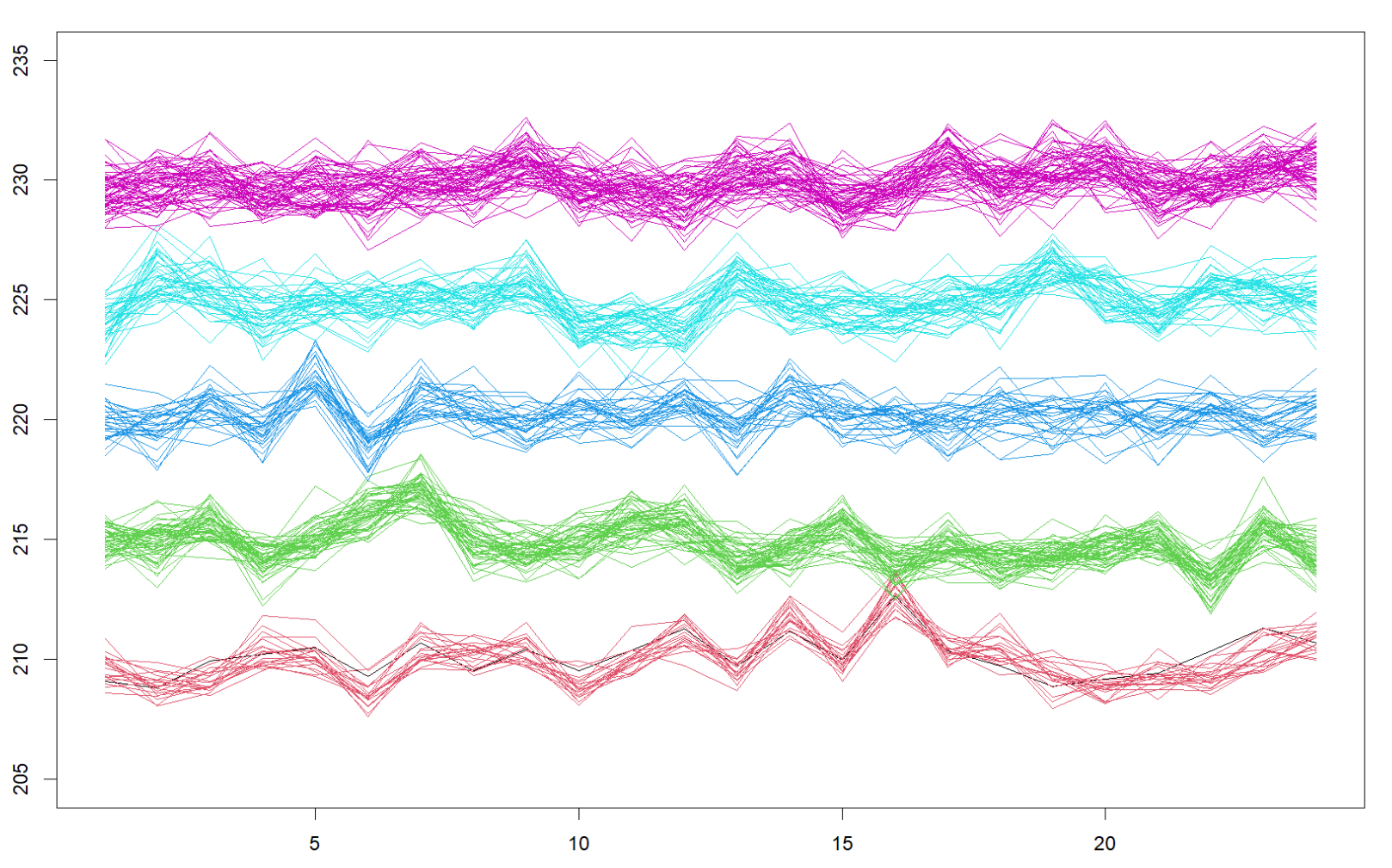} & \includegraphics[width=2.6cm]{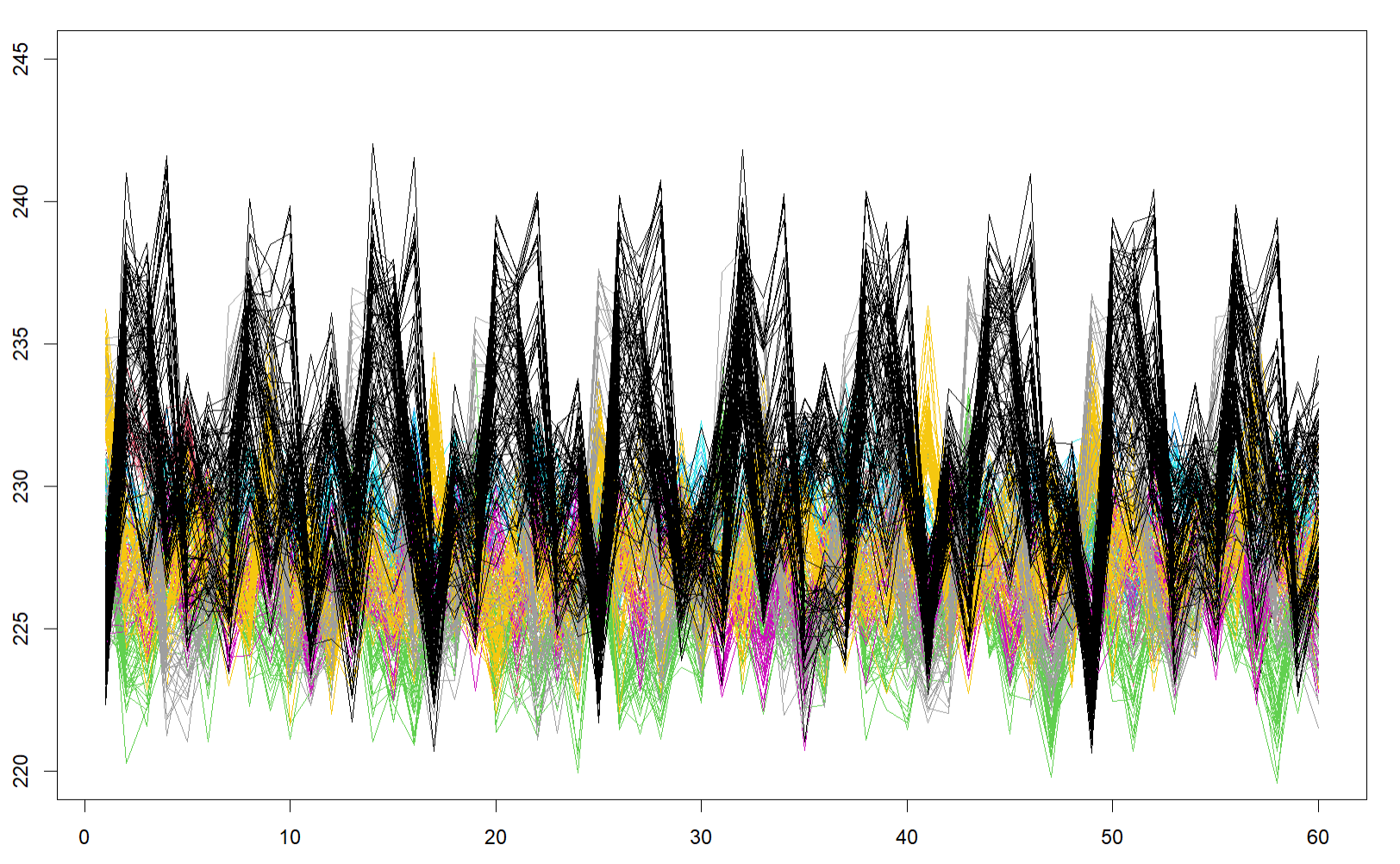} \\ \tiny D1&\tiny D2&\tiny C1 \\
\tiny\text{ n $=$ 200, \hspace{0.1cm} K $=$ 5, \hspace{0.1cm} T $=$ 24}&\tiny\text{ n $=$ 200, \hspace{0.1cm} K $=$ 5, \hspace{0.1cm} T $=$ 24}&\tiny\text{ n $=$ 310, \hspace{0.1cm} K $=$ 8, \hspace{0.1cm} T $=$ 60} \\
\centering\includegraphics[width=2.1cm]{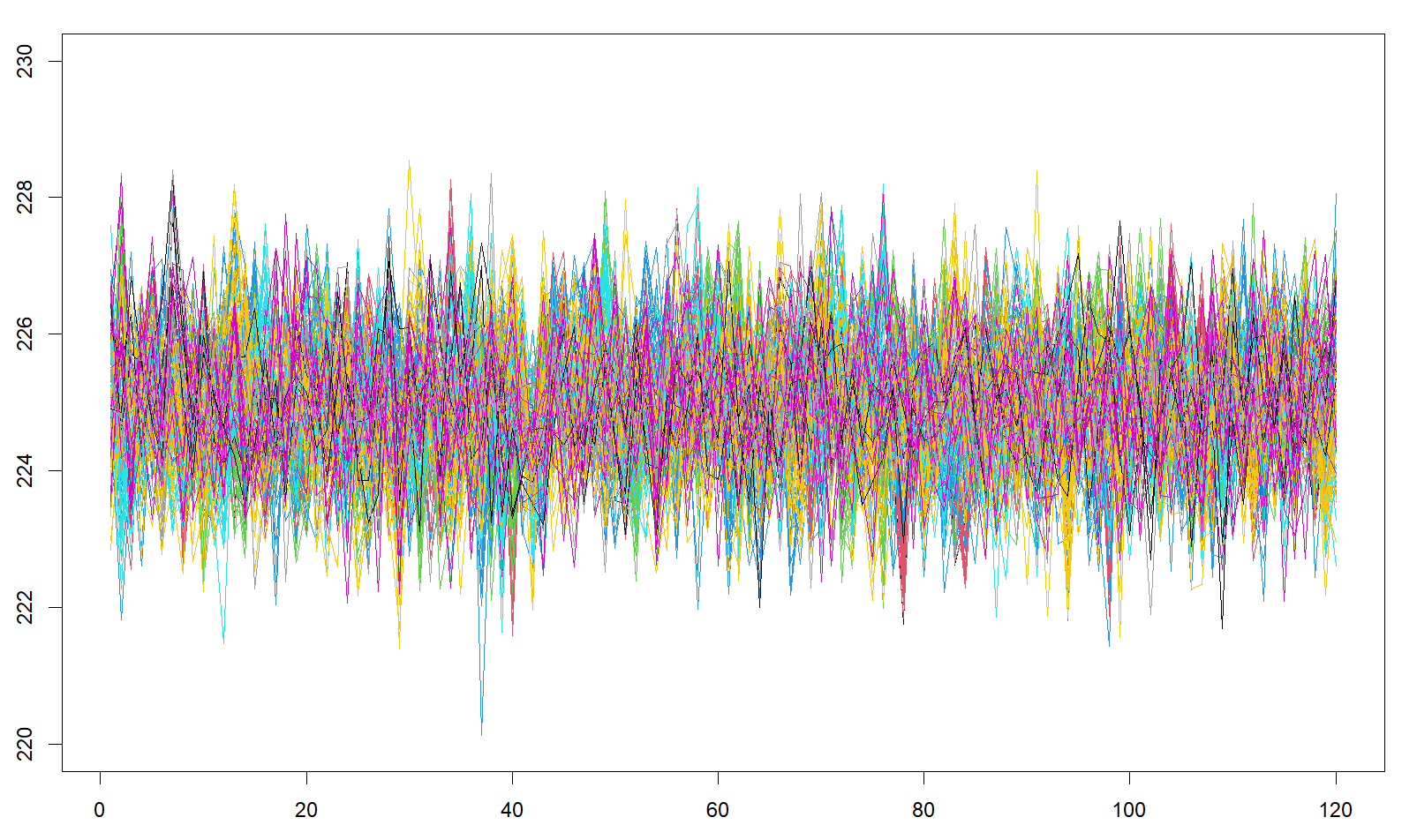} & \includegraphics[width=2cm]{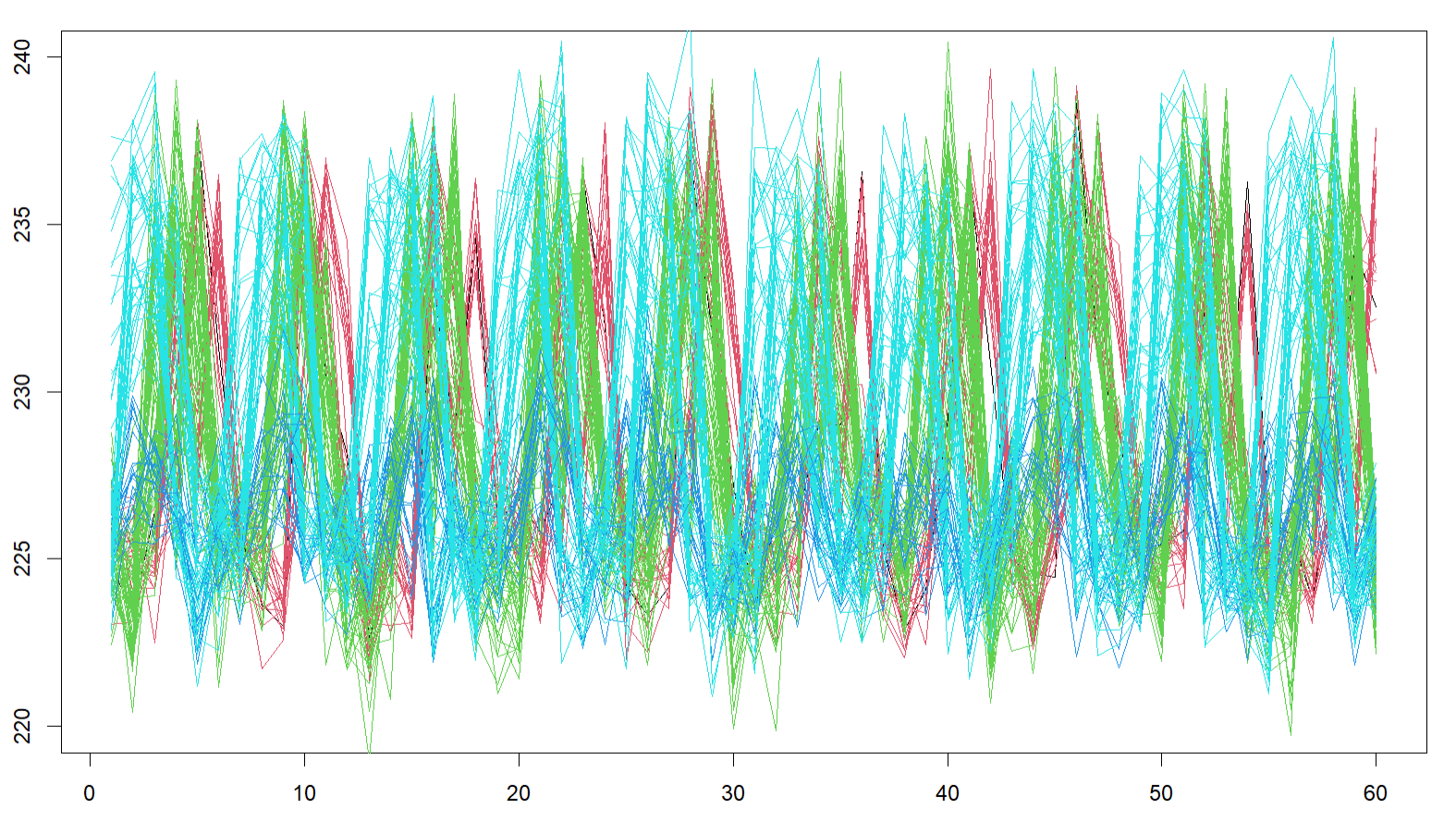} & \includegraphics[width=2cm]{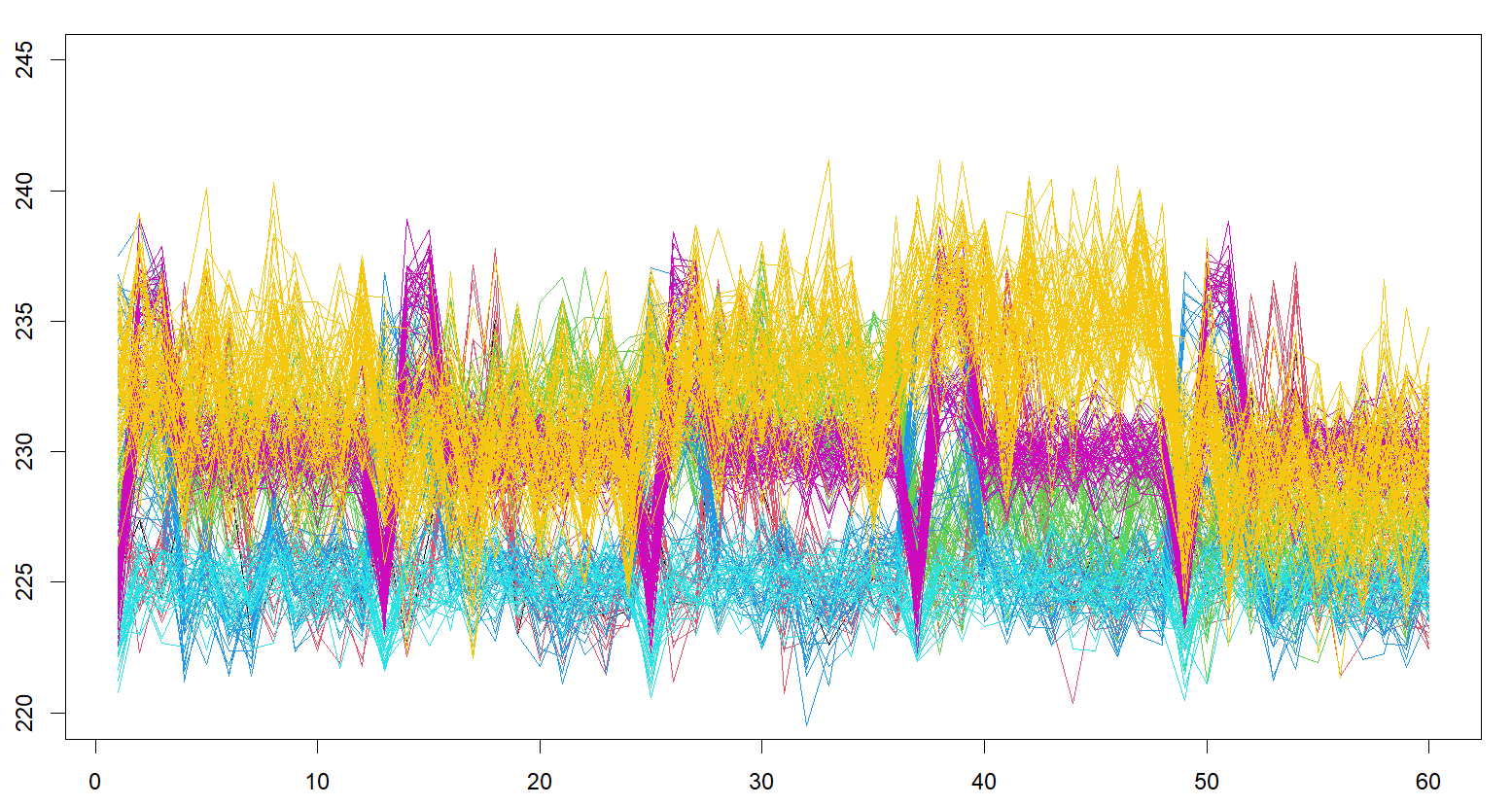} \\\tiny C2&\tiny M1&\tiny M2\\
\tiny\text{ n $=$ 150, \hspace{0.1cm} K $=$ 6, \hspace{0.1cm} T $=$ 120}&\tiny\text{ n $=$ 130, \hspace{0.1cm} K $=$ 4, \hspace{0.1cm} T $=$ 60}&\tiny\text{ n $=$ 270, \hspace{0.1cm} K $=$ 6, \hspace{0.1cm} T $=$ 60} \\
\centering\includegraphics[width=2cm]{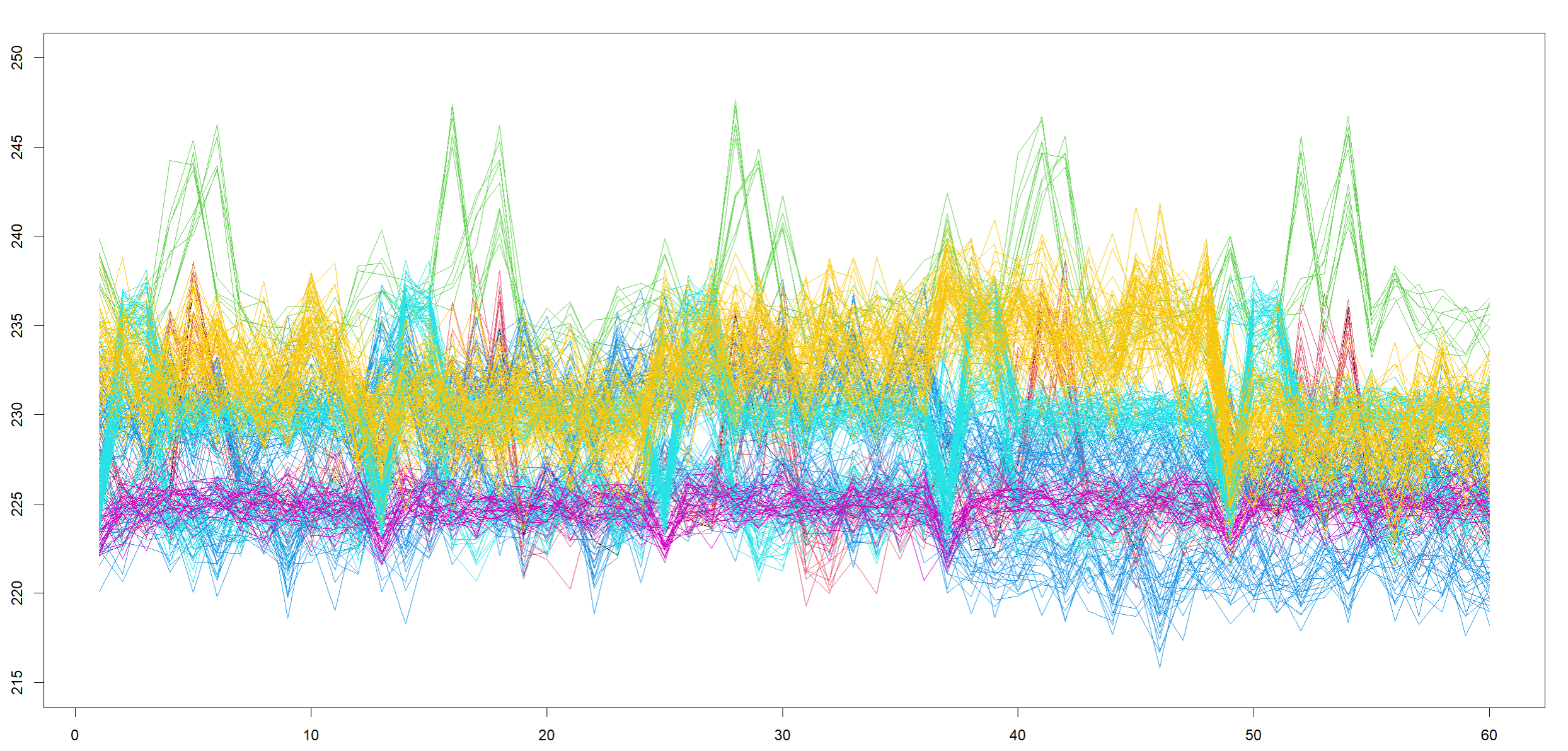} & \includegraphics[width=2cm]{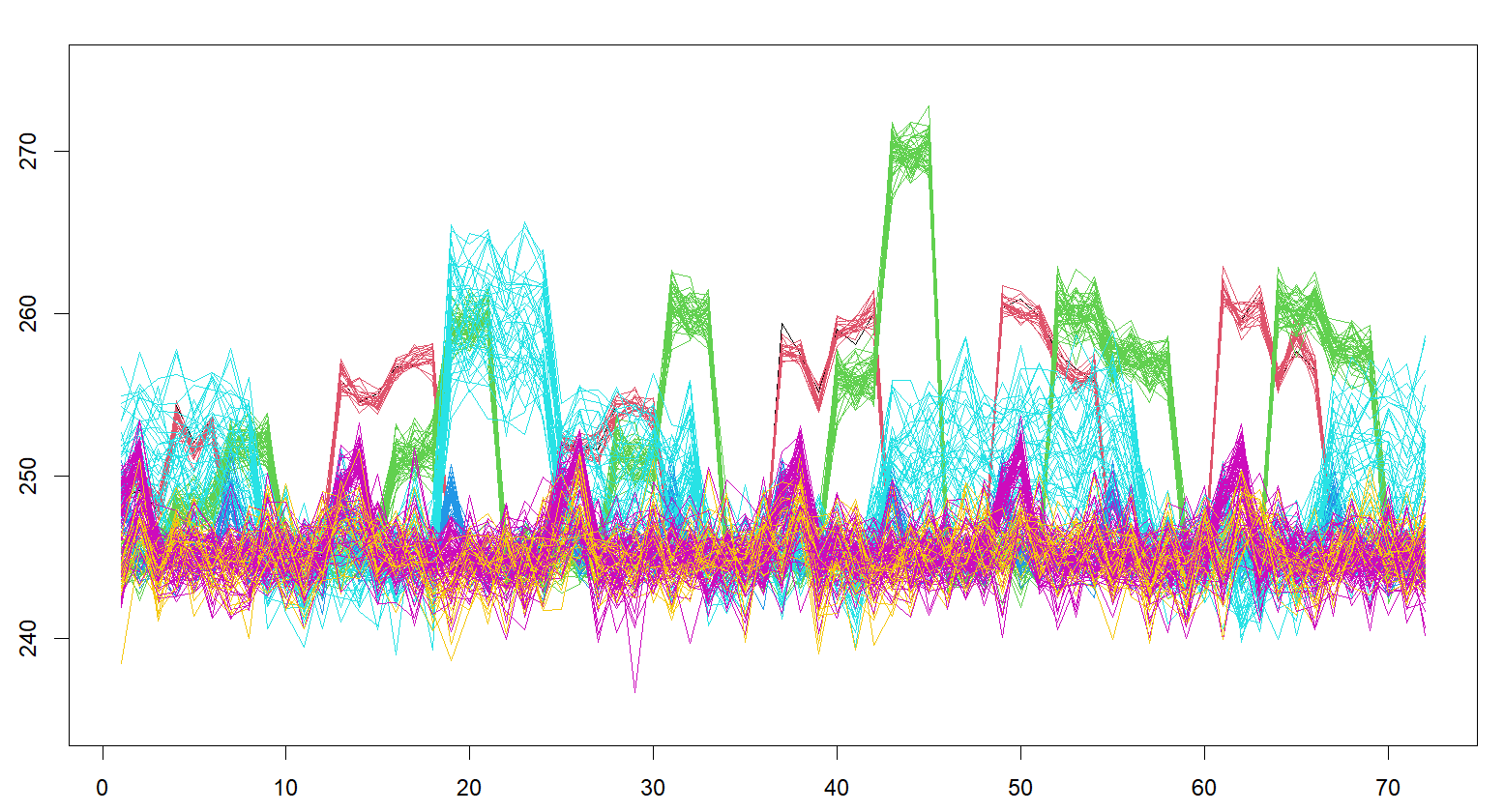} & \includegraphics[width=2.7cm]{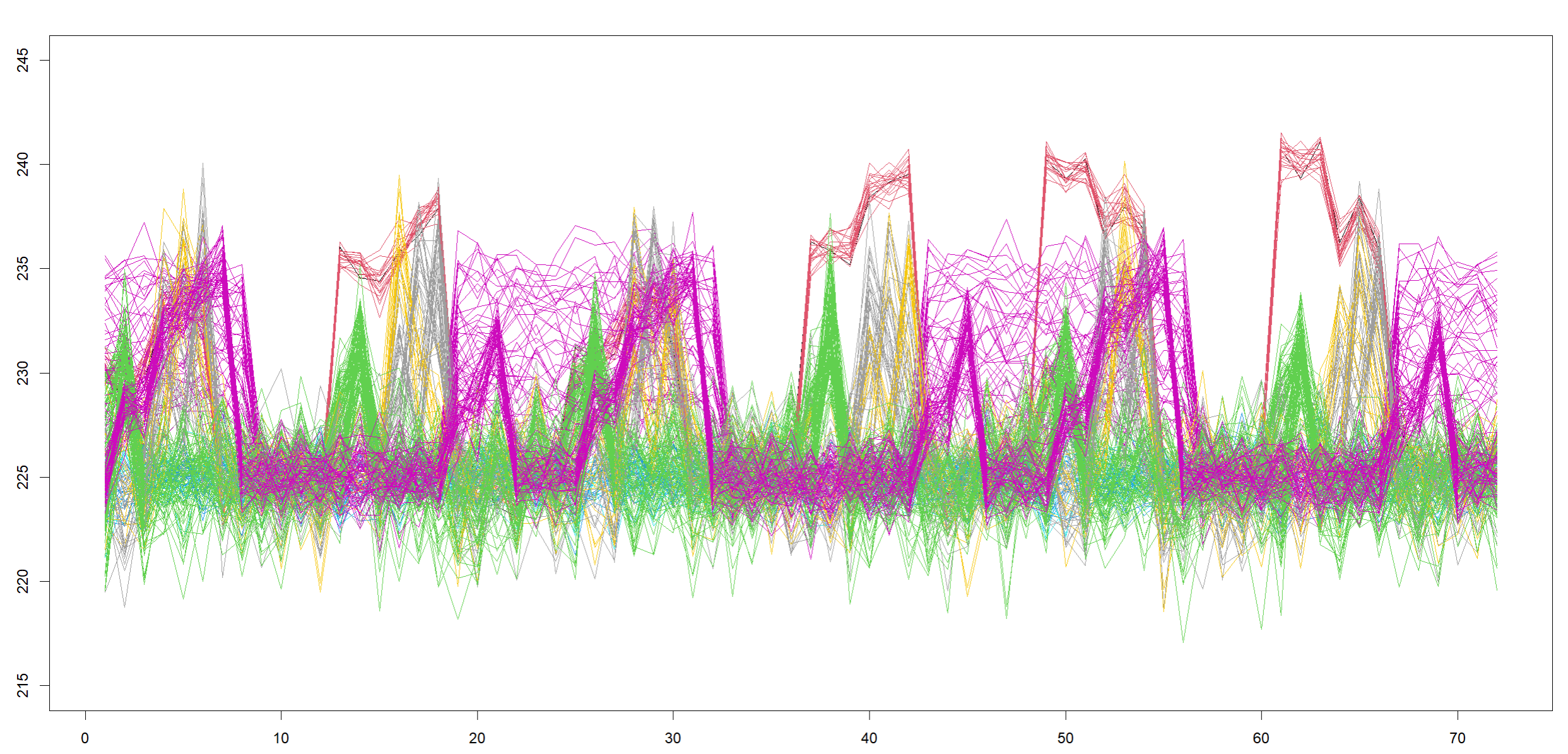}
\\\tiny M3&\tiny MC1&\tiny MC2\\
\tiny\text{ n $=$ 310, \hspace{0.1cm} K $=$ 6, \hspace{0.1cm} T $=$ 60}&\tiny\text{ n $=$ 240, \hspace{0.1cm} K $=$ 6, \hspace{0.1cm} T $=$ 72}&\tiny\text{ n $=$ 260, \hspace{0.1cm} K $=$ 7, \hspace{0.1cm} T $=$ 72} \\ \\

\hline \hline

\end{tabular}
}

\vspace{0.3cm}

\parbox{0.9\textwidth}{Note: n, K, T are the number of time series data points, the actual number of clusters, and the number of time steps, respectively.}
\caption{Artificial datasets}
\label{ArtfData}
\end{figure*}

\section{Experimental results}\label{sec:exp} 
In this section, we evaluate the performance of hierarchical clustering using our introduced dissimilarity measure on simulated datasets, which are illustrated in Figure~\ref{ArtfData} and can be summarized as follows:
\begin{list}{}{} \label{datagroup}
\item{\bf Group D: }{Each cluster in each dataset is generated from an independent Gaussian distribution with a different mean.}

\item{\bf Group C: }{Each cluster is generated from an independent Gaussian distribution with the same mean. Members of each cluster are highly correlated, but members from different clusters are slightly correlated or uncorrelated.}
\item{\bf Group M: }{This group is generated similarly to the first two groups, but each cluster in each dataset exhibits trends, seasonal patterns, and peaks at different times.}
\item{\bf Group MC: }{This group is similar to RCM but more complicated.}
\end{list}

For groups M and MC, we expect our proposed algorithm to be more suitable than the well-known ones. As discussed in the introduction, we sometimes face real-life situations similar to groups M and MC, such as water and electricity consumption and medical laboratory results. It is thus important to be able to differentiate the patterns observed in different groups within such datasets. 

The experiments are described in the following three subsections. The first presents a sensitivity analysis of the parameters of the proposed method. The second compares our new clustering scheme with well-known alternatives, including correlation-based hierarchical clustering, DTW, GAK, and basic K-means clustering, on the basis of their accuracy when setting the correct number of groups. The last explores the possibility of detecting the final number of clusters by the classic elbow method.


\subsection{Sensitivity analysis}\label{subsec:sensitivity}

\begin{table*}[h]
\centering
\resizebox{16cm}{!}{%
\begin{tabular}{ccc}

\hline \hline

\multicolumn{2}{c}{\multirow{1}{*}{}} \\

\centering\includegraphics[width=5cm]{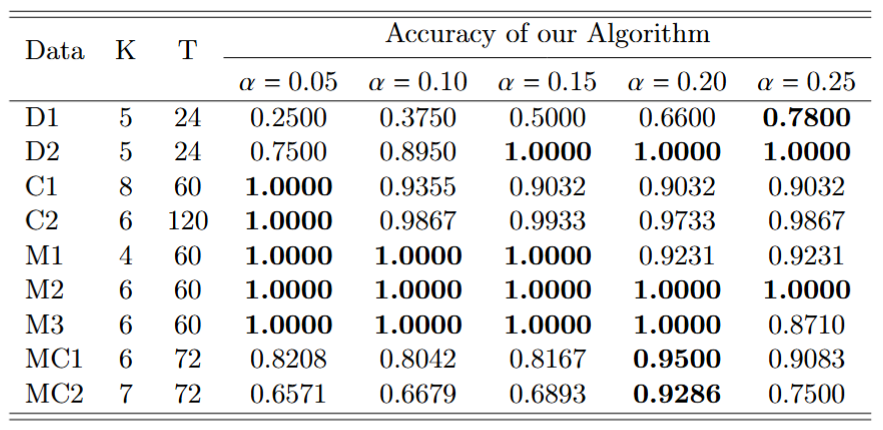} & \includegraphics[width=5.05cm]{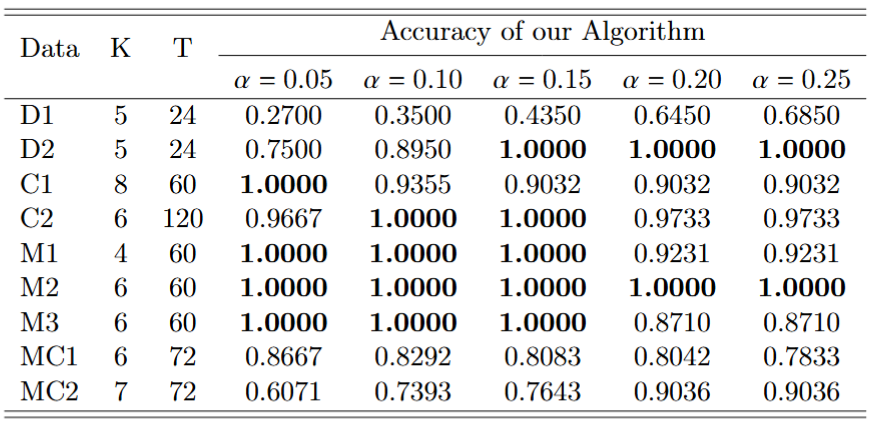}  \\ \tiny
 p = 0.1 &\tiny
 p = 0.2  \\

\centering\includegraphics[width=5cm]{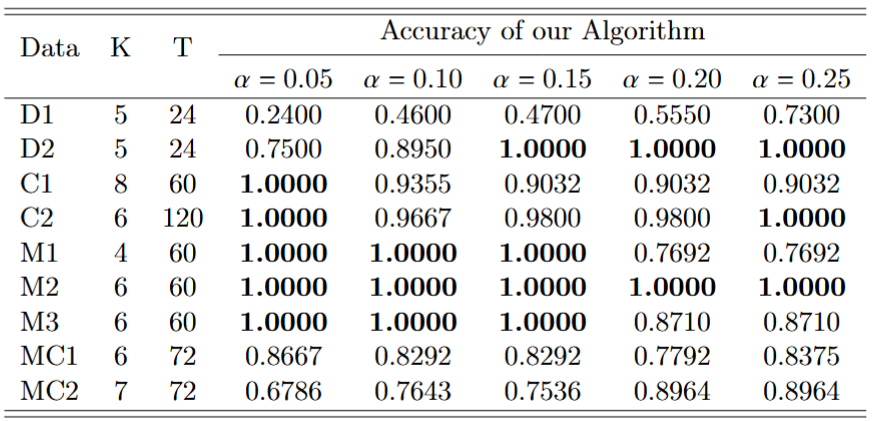} & \includegraphics[width=5.05cm]{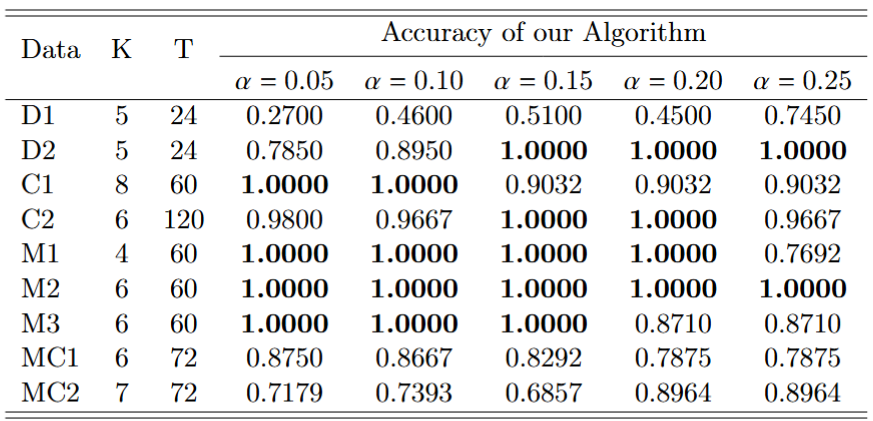}  \\ \tiny
 p = 0.3 &\tiny
 p = 0.5  \\

\centering\includegraphics[width=5cm]{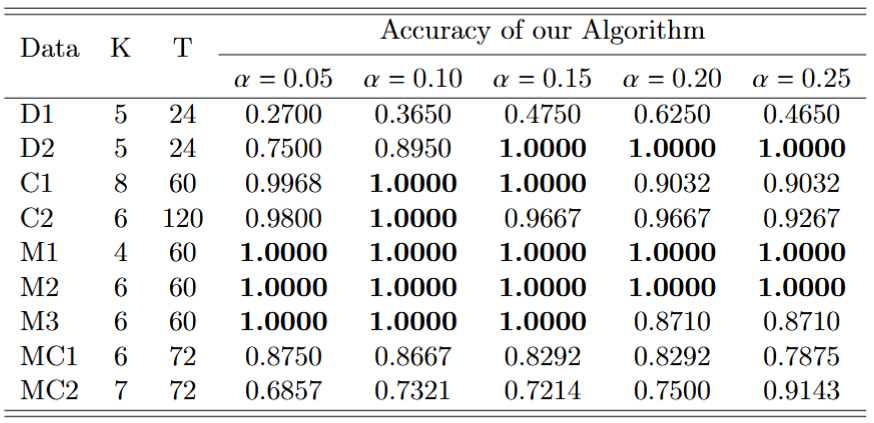} & \includegraphics[width=5.05cm]{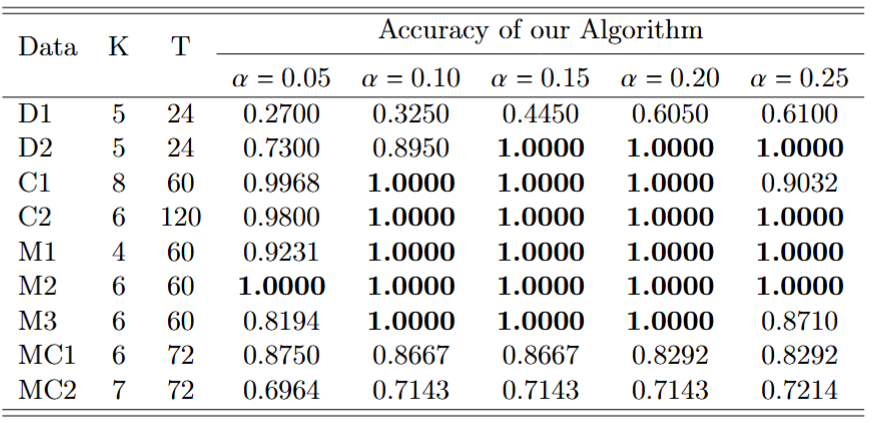}  \\ \tiny
 p = 0.7 &\tiny
 p = 1.0  \\

\hline \hline

\end{tabular}
}
\caption{Sensitivity analysis of the RDPC with respect to the parameters $p$ and $\alpha$. The highest accuracy of each dataset are bold.}
\label{sensitivity}
\end{table*}

In this subsection, we check how sensitive our proposed method is to changes in the parameter values by setting the parameters appearing in Definition~\ref{RDPCdef} to $\alpha \in \{0.05,0.10,0.15,0.20,0.25 \}$ and $p \in \{0.1,0.2,0.3,0.5,0.7,1\}$. We intentionally select $\alpha$ around $0.1$ to balance the importance of RankDiff and the correlation terms. We vary $p$ to demonstrate how it affects the results, although it is expected that a small $p$ will yield a better result. Note that when $p=1$, the RankDiff term is equivalent to the Manhattan distance. The sensitivity is evaluated on the basis of the accuracy of the proposed clustering algorithm when setting the correct number of groups for each simulated dataset. 

The results are presented in Table~\ref{sensitivity}. Since our main goal is to show that the proposed method is better suited to clustering the datasets in groups M and MC than the existing methods, we focus on the results of the last five rows of each sub-table. The algorithm is more sensitive to the parameter $\alpha$ than $p$, though both have a moderate effect on the accuracy within the range of 0.05 to 0.25. The combination of $p=0.1$ and $\alpha=0.2$ provides the best accuracy for groups M and MC; thus, we use these parameter values throughout the remainder of the work.

\begin{table*}[h]

\centering
\resizebox{11cm}{!}{%
\begin{tabular}{lccccccccccccc}
\hline\hline
\multirow{2}{*}{Data} & \multirow{2}{*}{K} & \multirow{2}{*}{T} & \multicolumn{5}{c}{Accuracy for correct K} \\ 
\cmidrule{4-8}
  & & & $d_{RDPC}$ & Correlation & DTW & GAK & Kmeans \\
\hline
D1 & 5 & 24 & 0.6600 & 0.2300 & {\bf 1.0000} & {\bf 1.0000} & {\bf 1.0000} \\
D2 & 5 & 24 & {\bf 1.0000} & 0.6600 & {\bf 1.0000} & {\bf 1.0000} & {\bf 1.0000} \\
C1 & 8 & 60 & 0.9032 & {\bf 0.9968} & 0.3548 & 0.7419 & 0.7742 \\
C2 & 6 & 120 & 0.9733 & {\bf 1.0000} & 0.1800 & 0.2133 & {\bf 1.0000} \\
M1 & 4 & 60 & {\bf 0.9231} & {\bf 0.9231} & 0.7000 & 0.5462 & 0.8462 \\
M2 & 6 & 60 & {\bf 1.0000} & 0.7481 & 0.4333 & 0.9630 & 0.9630 \\
M3 & 6 & 60 & {\bf 1.0000} & 0.8226 & 0.2419 & 0.5161 & 0.7419 \\
MC1 & 6 & 72 & {\bf 0.9500} & 0.8208 & 0.5542 & 0.5833 & 0.8000 \\
MC2 & 7 & 72 & {\bf 0.9286} & 0.6179 & 0.3643 & 0.4679 & 0.6464 \\
\hline\hline
\end{tabular}
}
\caption{Accuracy comparison between clustering algorithms on artificial datasets}
\label{Acctab}
\end{table*}

\subsection{Performance comparison}\label{subsec:performance comparison}

In this subsection, we compare the performance of our proposed method with the four existing approaches mentioned earlier. Table~\ref{Acctab} lists the accuracy of each algorithm when applied to each labeled dataset, which should indicate its suitability for that particular dataset.

Since the datasets in groups D and C are well separated in Euclidean distance and correlation, respectively, K-means and correlation-based hierarchical clustering are the best algorithms for these groups, as expected. DTW and GAK also work perfectly in group D since they are both based on the Euclidean distance. Our proposed $d_{RDPC}$ performs moderately well in both D and C but is not the best as it is a mixture of the two measures.

In contrast, our $d_{RDPC}$ method is substantially more accurate than the others for groups M and MC, with accuracy values of 0.9231, 1, 1, 0.95, and 0.9286. The K-means and correlation-based hierarchical clustering methods show moderate accuracy in both groups M and MC. DTW and GAK seem to be inappropriate for classifying these types of datasets since the time-warping concept should not be incorporated in cases with seasonal patterns. For instance, January should not be warped to match with April.


\subsection{Detecting the correct number of clusters}
In the previous subsection, we evaluated the performance of our proposed RDPC dissimilarity by comparing the resulting accuracy with other well-known methods when selecting the correct number of clusters. In reality, the correct number of clusters is unknown; hence, it is necessary to check that we can detect it. In this subsection, we verify that the classic elbow method, in which the $y$-axis is the sum of the applied within-cluster distances, can lead us to the correct number of clusters in most cases. \rcolor{A deeper concept of detecting a final number of clusters called cluster validity indices is not considered in this work since some serious effort needs to be done to make it fit to our new dissimilarity (see \cite{NW2024} and \cite{PW2025} for more details).}

\begin{table*}[h!]
\centering
\resizebox{13cm}{!}{%
\begin{tabular}{lccccccccccccccccccccccc}
\hline\hline
\multirow{3}{*}{Data} & \multirow{3}{*}{K} & &  \multicolumn{19}{c}{Elbow points} \\ 
\cmidrule{4-22}
  & & & \multicolumn{3}{c}{$d_{RDPC}$} & & \multicolumn{3}{c}{Correlation} & & \multicolumn{3}{c}{DTW} & & \multicolumn{3}{c}{GAK} & & \multicolumn{3}{c}{Kmeans} \\
  \cmidrule{4-6} \cmidrule{8-10} \cmidrule{12-14} \cmidrule{16-18} \cmidrule{20-22}
  & & & $e_{1}$ & $e_{2}$ & $e_{3}$ & & $e_{1}$ & $e_{2}$ & $e_{3}$ & & $e_{1}$ & $e_{2}$ & $e_{3}$ & & $e_{1}$ & $e_{2}$ & $e_{3}$ & & $e_{1}$ & $e_{2}$ & $e_{3}$ \\
\hline
D1 & 5 & & 2 & 4 & 8 & & 4 & 7 & 10 & & 2 & \textbf{5} & 7 & & \textbf{5} & 7 & - & & 2 & 3 & \textbf{5} \\
D2 & 5 & & 2 & 4 & 6 & & 3 & 4 & 12 & & 3 & \textbf{5}& 7 & & \textbf{5} & 7 & - & & 2 & 3 & \textbf{5} \\
C1 & 8 & & 3 & 5 & 7 & & 2 & 5 & \textbf{8} & & 2 & 4 & \textbf{8} & & 3 & 7 & \textbf{8} & & 2 & 5 & 7 \\
C2 & 6 & & 2 & 4 & \textbf{6} & & \textbf{6} & - & - & & 3 & - & - & & 3 & - & - & & \textbf{6} & - & - \\
M1 & 4 & & 2 & \textbf{4} & 6 & & 2 & 7 & 12 & & \textbf{4} & 9 & - & & 3 & - & - & & 2 & 5 & 8 \\
M2 & 6 & & 2 & 4 & \textbf{6} & & 5 & 7 & - & & 2 & 5 & - & & 8 & - & - & & 2 & \textbf{6} & - \\
M3 & 6 & & 3 & 5 & \textbf{6} & & 4 & 7 & 3 & & 4 & 10 & - & & 5 & 7 & - & & 2 & 7 & - \\
MC1 & 6 & & 2 & 4 & \textbf{6} & & 5 & 7 & 12 & & 4 & 7 & - & & 4 & 8 & - & & 4 & 7 & - \\
MC2 & 7 & & 3 & 4 & \textbf{7} & & 4 & 10 & 15 & & 3 & 6 & - & & 9 & - & - & & 5 & - & - \\
\hline\hline
\end{tabular}
}

\vspace{0.3cm}

\parbox{0.8\textwidth}{Note : The first three elbow points; $e_1$, $e_2$ and $e_3$ from applying each clustering algorithm to each dataset. The correct numbers of clusters are bold.}
\caption{Results of elbow method}
\label{Elbowtab}
\end{table*}

Table~\ref{Elbowtab} lists up to three elbow points obtained when applying each clustering algorithm to each dataset. For group D, the correct number of clusters for each dataset is within the first three elbow points for the DTW, GAK, and K-means methods. This is also the case for group C with the correlation-based hierarchical clustering method. 

On the other hand, for groups M and MC, there are no algorithms whose elbow points align with the true labels of the dataset, except for our proposed $d_{RDPC}$ method. This finding supports the argument that our algorithm is more suitable for datasets with greater diversity and complexity.

\section{Application to electricity consumption data}\label{sec:app}

It was shown in the previous section that our proposed algorithm outperforms traditional algorithms when applied to complicated datasets with temporal trends, seasonal patterns, and different peaks. Thus, in this section, we apply it to the electricity consumption dataset collected by the Provincial Electricity Authority (PEA) of Thailand. The dataset is a random sample\footnote{The PEA data are confidential, and we are allowed to access only a random sample of 1,200 users.} of records of monthly electricity consumption in kilowatt-hours (kWh) of 1,200 users over 36 months from January 2021 to December 2023. After removing all users with zero usage, there are 1,174 users remaining.

We first observe that there is a minority of users with unusually high usage. These should be excluded before applying a clustering algorithm; otherwise, they will be placed in their own small clusters. Hence, we remove these users by sequentially applying our proposed clustering method with two groups at a time and keeping only the lower usage group. This results in 1,017 regular users and 157 high-usage users. Descriptive statistics of the 157 users and a plot of their time series are shown in Table~\ref{Characoutlier} and Figure~\ref{outlier}, respectively. We now use our proposed algorithm to cluster these 1,017 regular users.

\begin{table*}[h!]
    \centering
    \begin{subtable}[t]{1\textwidth}
        \centering
        \resizebox{\textwidth}{!}{ 
        \begin{tabular}{ccccccccccccccccccccccccccc}
            \toprule
            \multirow{2}{*}{Cluster} & \multirow{2}{*}{N} & & \multicolumn{3}{c}{Yearly total} & & \multicolumn{3}{c}{Max} & & \multicolumn{3}{c}{Min} & & \multicolumn{3}{c}{Mean} & & \multicolumn{3}{c}{SD} \\
            \cmidrule{4-6} \cmidrule{8-10} \cmidrule{12-14} \cmidrule{16-18} \cmidrule{20-22}
            & & & 1 & 2 & 3 & & 1 & 2 & 3 & & 1 & 2 & 3 & & 1 & 2 & 3 & & 1 & 2 & 3 \\ 
            \hline \\
            0 & 157 & & 9052.48 & 8933.10 & 9591.73 & & 858.06 & 814.13 & 965.13 & & 596.20 & 669.55 & 640.86 & & 754.37 & 744.42 & 799.31 & & 91.87 & 54.08 & 102.91\\
            \bottomrule
        \end{tabular}
        }
    \end{subtable}
    
    \hspace{2\textwidth} 
    
    \begin{subtable}[t]{1\textwidth}
        \centering
        \resizebox{0.4\textwidth}{!}{ 
        \begin{tabular}{ccccccccccccccccccccc}
            \toprule
            \multirow{2}{*}{Cluster} & & \multirow{2}{*}{Mean Cor} & & \multirow{2}{*}{SD Cor} & & \multicolumn{3}{c}{Max month} & & \multicolumn{3}{c}{Min month} \\
            \cmidrule{7-9} \cmidrule{11-13}
            & & & & & & 1 & 2 & 3 & & 1 & 2 & 3  
             \\ 
            \hline \\
            0 & & 0.20 & & 0.35 & & 5 & 5 & 5 & & 2 & 1 & 1  \\
            \bottomrule
        \end{tabular}
        }
    \end{subtable}
\caption{Extremely high usage users} 
\label{Characoutlier}
\end{table*}

\begin{figure}[H]
\centering\includegraphics[width=12cm]{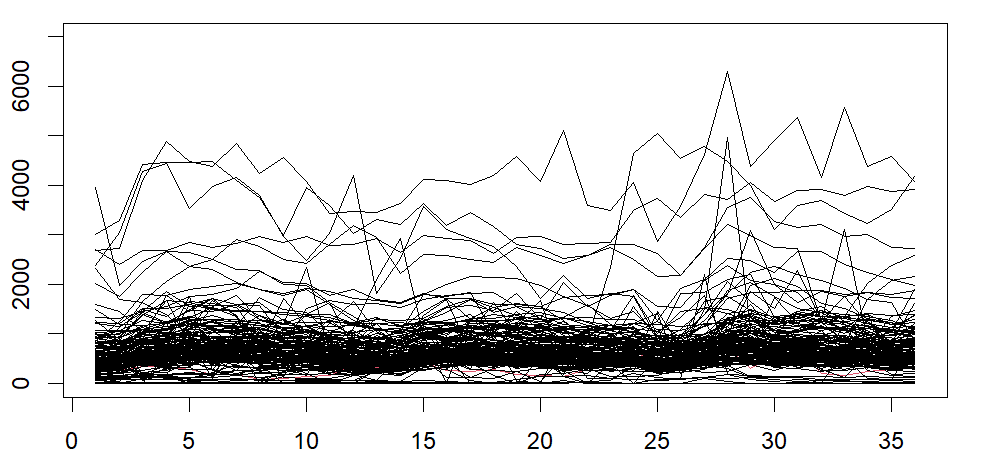}
\caption{High usage users}
\label{outlier}
\end{figure}

We determine the optimal number of clusters using the elbow method, as illustrated in Figure~\ref{elbowel}. Four distinct elbow points are observed with another slight elbow at nine, leading us to select seven as the final number of clusters. This choice ensures a diverse grouping structure, allowing for a thorough examination of each group's characteristics, and will be used in the subsequent steps of the clustering process. 

\begin{figure}[H]
\centering\includegraphics[width=12cm]{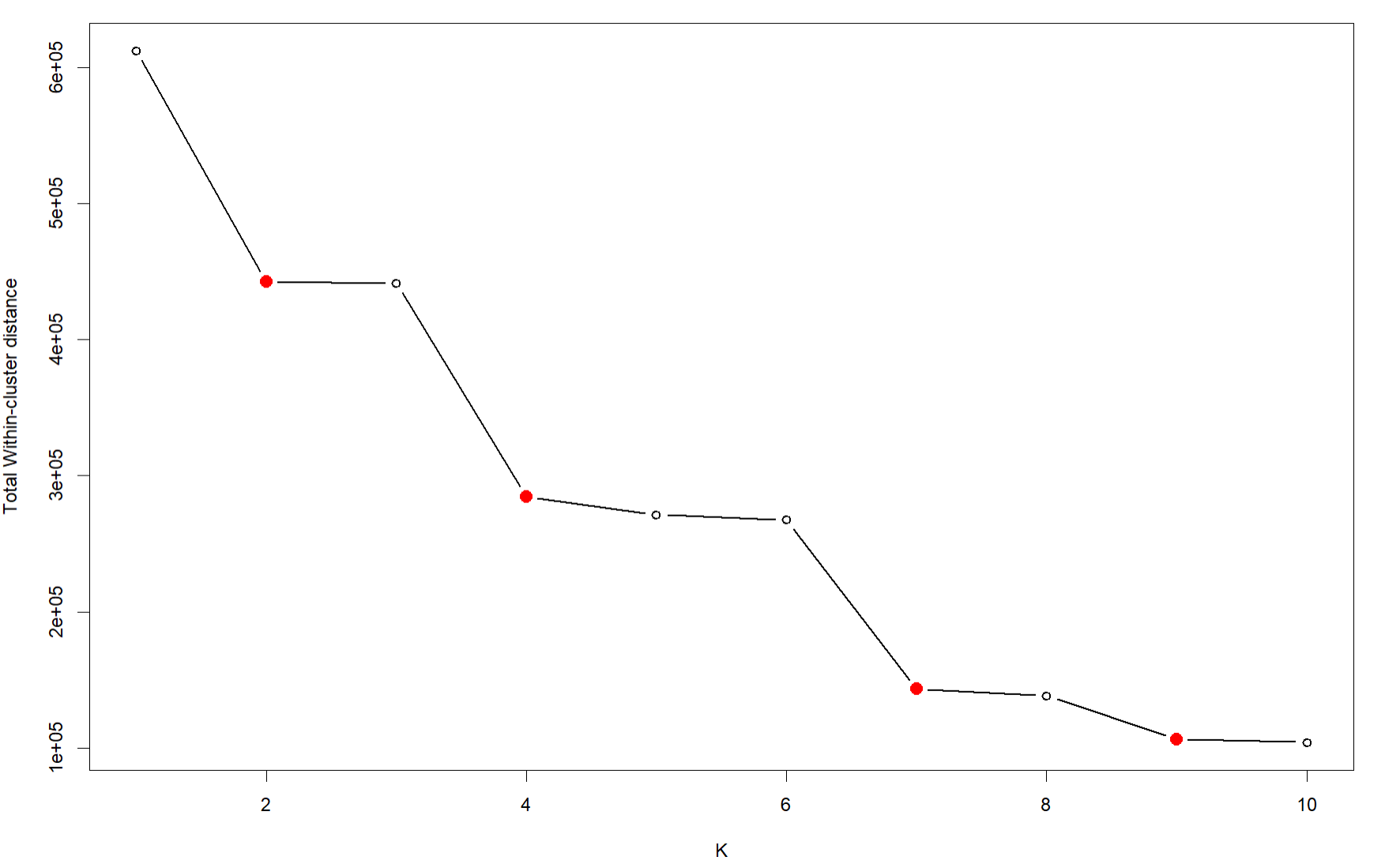}
\caption{Elbow method on 1,017 remaining users}
\label{elbowel}
\end{figure}

\begin{figure}[H]
\centering\includegraphics[width=16cm]{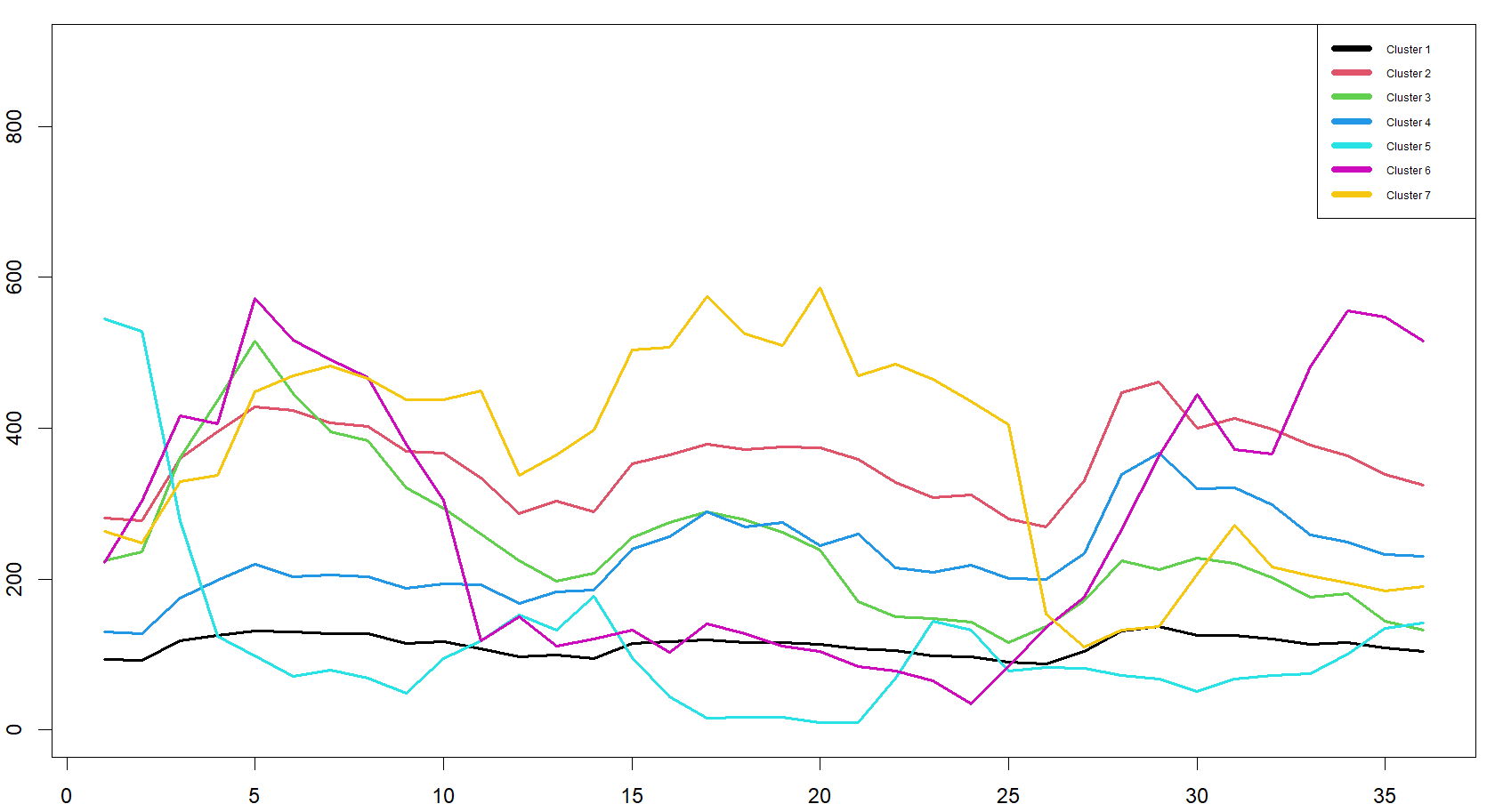}
\caption{Average electricity consumption in kWh of each cluster from January 2021 to December 2023.}
\label{volumeelec}
\end{figure}




\begin{table*}[h!]
    \centering
    \begin{subtable}[t]{1\textwidth}
        \centering
        \resizebox{1.0\textwidth}{!}{ 
        \begin{tabular}{ccccccccccccccccccccccccccc}
            \toprule 
            \multirow{2}{*}{Cluster} & \multirow{2}{*}{N} & & \multicolumn{3}{c}{Yearly total} & & \multicolumn{3}{c}{Max} & & \multicolumn{3}{c}{Min} & & \multicolumn{3}{c}{Mean} & & \multicolumn{3}{c}{SD} \\
            \cmidrule{4-6} \cmidrule{8-10} \cmidrule{12-14} \cmidrule{16-18} \cmidrule{20-22}
            & & & 1 & 2 & 3 & & 1 & 2 & 3 & & 1 & 2 & 3 & & 1 & 2 & 3 & & 1 & 2 & 3 \\ 
            \hline \\
            1 & 863 & & 1381.44 & 1294.00 & 1360.86 & & 131.23 & 118.70 & 137.30 & & 92.09 & 94.00 & 87.66 & & 115.12 & 107.83 & 113.40 & & 14.37 & 9.04 & 15.58\\
            2 & 68 & & 4332.06 & 4115.43 & 4404.69 & & 428.07 & 378.93 & 461.34 & & 277.49 & 289.16 & 269.53 & & 361.00 & 342.95 & 367.06 & & 55.02 & 32.83 & 60.84\\
            3 & 18 & & 4095.50 & 2611.89 & 2142.94 & & 515.00 & 289.44 & 227.56 & & 223.94 & 142.50 & 115.56 & & 341.29 & 217.66 & 178.58 & & 96.95 & 55.50 & 39.24 \\
            4 & 55 & & 2201.96 & 2843.18 & 3251.00 & & 219.29 & 289.33 & 367.20 & & 127.95 & 183.11 & 199.75 & & 183.50 & 236.93 & 270.92 & & 28.92 & 34.77 & 56.25 \\
            5 & 5 & & 2204.60 & 862.80 & 1023.60 & & 544.80 & 176.80 & 142.00 & & 48.40 & 9.60 & 51.00 & & 183.72 & 71.90 & 85.30 & & 175.07 & 61.73 & 27.40 \\
            6 & 4 & & 4345.25 & 12911.25 & 4307.25 & & 572.50 & 140.75 & 555.75 & & 118.25 & 33.75 & 83.50 & & 362.10 & 100.94 & 358.94 & & 145.03 & 30.76 & 162.19 \\
            7 & 4 & & 4706.50 & 5824.50 & 2405.50 & & 482.50 & 586.00 & 405.25 & & 247.75 & 365.00 & 110.25 & & 392.21 & 485.38 & 200.46 & & 84.01 & 64.97 & 77.83 \\
            \bottomrule
        \end{tabular}
        }
    \end{subtable}
    
    \hspace{2\textwidth} 
    
    \begin{subtable}[t]{1\textwidth}
        \centering
        \resizebox{0.6\textwidth}{!}{ 
        \begin{tabular}{ccccccccccccccccccccccccccccccc}
            \toprule
            \multirow{2}{*}{Cluster} & & \multirow{2}{*}{Mean Cor} & & \multirow{2}{*}{SD Cor} & & \multicolumn{3}{c}{Max month} & & \multicolumn{3}{c}{Min month} &  & \multicolumn{2}{c}{Trend} \\
            \cmidrule{7-9} \cmidrule{11-13} \cmidrule{15-16}
            & & & & & & 1 & 2 & 3 & &  1 & 2 & 3 & & 1 & 2 \\
            \hline \\
            1 & & 0.12 & & 0.31 & & 5 & 5 & 5 & & 2 & 2 & 2 & & \begin{tikzpicture}
            \draw[thick] (0.5,0.5) -- ++(0.5,-0.5)   -- ++(0.5,0.5);
            \end{tikzpicture} & \begin{tikzpicture}
            \draw[thick] (0.5,0) -- ++(0.5,0) -- ++(0.5,0); 
            \end{tikzpicture} \\
            2 & & 0.37 & & 0.28 & & 5 & 5 & 5 & & 2 & 2 & 2 & & \begin{tikzpicture}
             \draw[thick] (0.5,0) -- ++(0.5,0)   -- ++(0.5,0);
            \end{tikzpicture} & \begin{tikzpicture}
            \draw[thick] (0.5,0) -- ++(0.5,0) -- ++(0.5,0.5); 
            \end{tikzpicture} 
            \begin{tikzpicture}
            \draw[thick] (0.5,0.5) -- ++(0.5,-0.5) -- ++(0.5,0); 
            \end{tikzpicture}\\
            3 & & 0.51 & & 0.18 & & 5 & 5 & 6 & & 1 & 12 & 1 & & \begin{tikzpicture}
            \draw[thick] (0.5,0.5) -- ++(0.5,-0.5)   -- ++(0.5,-0.5);
            \end{tikzpicture} & \begin{tikzpicture}
            \draw[thick] (0.5,0.5) -- ++(0.5,-0.5) -- ++(0.5,0); 
            \end{tikzpicture} \\
            4 & & 0.29 & & 0.30 & & 5 & 5 & 5 & & 2 & 1 & 2 & & \begin{tikzpicture}
            \draw[thick] (0.5,-0.5) -- ++(0.5,0.5)   -- ++(0.5,0.5);
            \end{tikzpicture} & \begin{tikzpicture}
            \draw[thick] (0.5,-0.5) -- ++(0.5,0.5) -- ++(0.5,0); 
            \end{tikzpicture} \\
            5 & & 0.59 & & 0.18 & & 1 & 2 & 12 & & 9 & 8 & 6 & & \begin{tikzpicture}
            \draw[thick] (0.5,0.5) -- ++(0.5,-0.5)   -- ++(0.5,0.5);
            \end{tikzpicture} & \begin{tikzpicture}
            \draw[thick] (0.5,0.5) -- ++(0.5,-0.5) -- ++(0.5,-0.5); 
            \end{tikzpicture} \\
            6 & & 0.59 & & 0.12 & & 5 & 5 & 10 & & 11 & 12 & 1 & & \begin{tikzpicture}
            \draw[thick] (0.5,0.5) -- ++(0.5,-0.5)   -- ++(0.5,0.5);
            \end{tikzpicture} & - \\
            7 & & 0.50 & & 0.23 & & 7 & 8 & 1 & & 2 & 1 & 3 & & \begin{tikzpicture}
            \draw[thick] (0.5,-0.5) -- ++(0.5,0.5)   -- ++(0.5,-0.5);
            \end{tikzpicture} & \begin{tikzpicture}
            \draw[thick] (0.5,0) -- ++(0.5,0) -- ++(0.5,-0.5); 
            \end{tikzpicture} 
            \begin{tikzpicture}
            \draw[thick] (0.5,0.5) -- ++(0.5,-0.5) -- ++(0.5,-0.5); 
            \end{tikzpicture}\\
            \bottomrule
        \end{tabular}
        }
    \end{subtable}
\caption{Clusters characteristics} 
\label{Charac7}
\end{table*}

    

Table~\ref{Charac7} lists some characteristics of each of the seven groups, including the monthly mean, maximum, minimum, standard deviation, correlation mean, and correlation standard deviation, as well as the month with the highest usage. We also show the two strongest consumption trends in three years, with upward and downward trends identified when the usage increases or decreases, respectively, by more than 10\% from the previous year. Cluster~1 is the largest group with 863 users, which we consider the standard users. Since this dataset is only a small sample from the whole country, we claim that the very small clusters like Clusters 5, 6, and~7 represent classes of similar users across the whole country that are worth considering. Most of the clusters show the highest consumption during the summer and the lowest during the winter. This is expected in the context of Thailand, which has high temperatures during the summer and moderate temperatures during the winter. Figure~\ref{volumeelec} also shows the overall monthly trend of each cluster. They can be summarized as follows:
\begin{list}{}{} \label{characteristic}
\item{\bf Cluster 1 (Standard users): }{This is the group with the lowest electricity consumption, and within the group, there is no clear pattern of electricity usage in any particular direction. The average usage is about 110~kWh per month.}
\item{\bf Cluster 2 (High stable users): }{This is the group with high and stable consumption, with an average of about 350~kWh per month. The users in this group are moderately correlated, with a correlation of 0.37.}
\item{\bf Cluster 3 (Declining users): }{This is the group with highly correlated users, with a correlation of 0.51, whose usage significantly dropped in both years relative to the previous years (by 36\% and 18\% on average, respectively).}
\item{\bf Cluster 4 (Increasing users): }{This group is similar to Cluster~3 but with the opposite trend. The users in this group had a similar level of usage to Cluster~3 but with a lower correlation. The usages increased in both years relative to the previous years (by 29\% and 14\% on average, respectively).}
\item{\bf Cluster 5 (Winter users): }{This is the group with highly correlated users, with a correlation of 0.59, with the special characteristic that their highest usage was during the winter. This is unusual in Thailand.}
\item{\bf Cluster 6 (2022 dropped users): }{This is the group with highly correlated users, with a correlation of 0.59, with the special characteristic that the 2022 consumption dropped relative to the previous year (by 72\% on average). } 
\item{\bf Cluster 7 (2023 dropped users): }{This is the group with highly correlated users, with a correlation of 0.51, with the special characteristic that the 2023 consumption dropped relative to the previous year (by 59\% on average).} 
\end{list}

\section{Conclusion} \label{sec:conclusion}

In this work, we propose a new dissimilarity measure called RDPC. It is intended to be used in hierarchical clustering of time series data. The RDPC is defined as an interpolation between the Pearson correlation dissimilarity and a weighted average of a specified fraction of the largest element-wise differences between two time series. The main benefits of using RDPC for clustering time series are as follows: It can distinguish time series on the basis of both Euclidean distance and correlation, and it has parameters that can be adjusted by users, providing flexibility.

To demonstrate benefits, we evaluate the performance compared with hierarchical clustering using the Pearson correlation dissimilarity measure, DTW, GAK, and K-means clustering on nine simulated datasets. These include four complicated datasets with different seasonal patterns, peaks, correlations, and Euclidean distances. Our proposed RDPC outperforms the existing algorithms in these complicated cases.

Finally, we apply our new method to PEA electricity consumption data and cluster a random sample of users in Thailand into seven main groups, namely, standard, high stable, declining, increasing, winter, 2022 dropped, and 2023 dropped users. These interesting characteristics of each cluster follow from the way our RDPC is defined. 

Future research directions include applying our RDPC to other well-known clustering algorithms, adding more parameters to extend the RDPC, and implementing the RDPC in a wider variety of real-world situations.

\section*{Acknowledgements}
Both authors appreciate the Provincial Electricity Authority (PEA) that allows us to access the electricity consumption dataset. Chutiphan would like to thank Development and Promotion of Science and Technology Talents Project Master’s Degree Research Scholarship from
Institute for the Promotion of Teaching Science and Technology, and Nathakhun would like to also thank National Research Council of Thailand (NRCT), Grant number: N42A660991 (2023) for the project partial financial support.

\end{document}